\documentclass[Afour,sageh,times]{sagej}

\usepackage{moreverb,url}

\usepackage[colorlinks,bookmarksopen,bookmarksnumbered,citecolor=red,urlcolor=red]{hyperref}

\usepackage{graphicx}%
\usepackage{multirow}%
\usepackage{amsmath,amssymb,amsfonts}%
\usepackage{amsthm}%
\usepackage{mathrsfs}%
\usepackage[title]{appendix}%
\usepackage{xcolor}%
\usepackage{textcomp}%
\usepackage{manyfoot}%
\usepackage{booktabs}%
\usepackage{algorithmicx}%
\usepackage{algpseudocode}%
\usepackage{listings}%
\usepackage{tabularx}
\usepackage{comment}
\usepackage{pgfplots}
\pgfplotsset{compat=1.18}
\usepackage{tikz}

\usepackage[ruled,vlined,linesnumbered]{algorithm2e}
\usepackage{hyperref}
\usepackage{fancyhdr,graphicx,amsmath,amssymb}
\usepackage{makecell}
\usepackage{amsmath}
\usepackage{tikz}
\usepackage{amsthm}

\newtheorem{theorem}{Theorem}%
\newtheorem{proposition}[theorem]{Proposition}%
\newtheorem{property}[theorem]{Property}%

\newcommand\BibTeX{{\rmfamily B\kern-.05em \textsc{i\kern-.025em b}\kern-.08em
T\kern-.1667em\lower.7ex\hbox{E}\kern-.125emX}}

\begin{document}

\title{Forecast Sports Outcomes under Efficient Market Hypothesis: Theoretical and Experimental Analysis of Odds-Only and Generalised Linear Models}

\author{Kaito Goto\affilnum{1}, Naoya Takeishi\affilnum{1} and Takehisa Yairi\affilnum{1}}

\affiliation{\affilnum{1}Research Center for Advanced Science and Technology, The University of Tokyo, Japan}

\corrauth{Kaito Goto, 
Research Center for Advanced Science and Technology,
The University of Tokyo,
153-8904, Komaba 4-6-1, Meguro-ku
Tokyo, Japan}

\email{kaitogoto@g.ecc.u-tokyo.ac.jp}

\keywords{Predictive Modelling, Forecast Game Outcomes, Sports Forecasting}

\maketitle

\section{Abstract}
Converting betting odds into accurate outcome probabilities is a fundamental challenge in order to use betting odds as a benchmark for sports forecasting and market efficiency analysis. In this study, we propose two methods to overcome the limitations of existing conversion methods. Firstly, we propose an odds-only method to convert betting odds to probabilities without using historical data for model fitting. While existing odds-only methods, such as Multiplicative, Shin, and Power exist, they do not adjust for biases or relationships we found in our betting odds dataset, which consists of 90014 football matches across five different bookmakers. To overcome these limitations, our proposed Odds-Only-Equal-Profitability-Confidence (OO-EPC) method aligns with the bookmakers' pricing objectives of having equal confidence in profitability for each outcome. We provide empirical evidence from our betting odds dataset that, for the majority of bookmakers, our proposed OO-EPC method outperforms the existing odds-only methods. Beyond controlled experiments, we applied the OO-EPC method under real-world uncertainty by using it for six iterations of an annual basketball outcome forecasting competition. Secondly, we propose a generalised linear model that utilises historical data for model fitting and then converts betting odds to probabilities. Existing generalised linear models attempt to capture relationships that the Efficient Market Hypothesis already captures. To overcome this shortcoming, our proposed Favourite-Longshot-Bias-Adjusted Generalised Linear Model (FL-GLM) fits just one parameter to capture the favourite-longshot bias, providing a more interpretable alternative. We provide empirical evidence from historical football matches where, for all bookmakers, our proposed FL-GLM outperforms the existing multinomial and logistic generalised linear models.

\section{Introduction}
\label{sec:1}

Professional sport is a growing industry \cite{SportsIndustryOverview}, which increases the availability of betting odds datasets for analysing market efficiencies and biases in betting odds markets to estimate the probability of sports outcomes. However, estimating the probability of sports outcomes is challenging because many factors, such as home advantage, player quality, team chemistry, and others, can influence sports outcomes \cite{LongTermFootballPrediction}.

In sports betting markets, bookmakers set betting odds that are high enough to attract bettors but low enough to be profitable. Rational bettors then attempt to find betting odds that are set above fair odds to generate a long-term profit. Under the Efficient Market Hypothesis (EMH) \cite{EfficientMarketHypothesis}, all relevant information about a sports outcome is assumed to be reflected in the betting odds. Consequently, under EMH, bettors cannot find betting odds that are set above fair odds to secure a long-term profit. This enables economic research on market efficiency \cite{BettingOddsEfficiencyComparison} and portfolio management \cite{BettingStrategyNBA} to be conducted using betting odds.

Sports professionals and researchers often use betting odds under the EMH to account for the many factors that influence sports outcomes. Betting odds have been shown to correlate more strongly with actual outcomes than official rankings. For example, \cite{RatingsBettingOddsComparison} showed that odds-based predictions outperformed FIFA rankings when forecasting the results of the Euro 2008 football tournament. As a result, betting odds are often used as a benchmark to evaluate the accuracy of sports forecasting models \cite{BradleyTerryTennisPrediction}, or to propose models that attempt to capture inefficiencies in betting markets \cite{FootballBothTeamScorePrediction}. In addition, betting odds can be used to calculate Elo ratings \cite{EloRating} to measure the strength of football teams \cite{BettingOddsRatingSystem}. Such odds-derived Elo ratings can help sports organisers develop fairer tournament structures and ranking systems. However, relatively few studies have focused on converting betting odds to probabilities, even though this conversion is important for using betting odds as a benchmark and for accurately estimating the probabilities of sports outcomes.

Four notable methods currently convert betting odds to probabilities without using historical data. The multiplicative conversion \cite{MultiplicativeConversion}, two variants of Shin’s conversion \cite{ShinConversionStrumbeljVariant,ShinConversionKizildemirVariant}, and the power conversion \cite{PowerConversion}. The most common approach is the multiplicative conversion \cite{MultiplicativeConversion}, which normalises the inverse odds to estimate the outcome probabilities. However, this method assumes that bettors have the same expected loss on all outcomes. This assumption contradicts the favourite-longshot bias \cite{FLBias}, which indicates that outcomes with a higher probability of winning tend to have a lower expected loss for bettors. In financial markets, this bias implies that in-the-money options have higher expected returns than out-of-the-money options \cite{FLBiasStockMarkets}.

Shin’s conversion has two variants. A numerical variant \cite{ShinConversionStrumbeljVariant} that assumes insider bettors exist only for the successful outcome, and an analytical variant \cite{ShinConversionKizildemirVariant} that allows insider bettors to exist for any outcome. We provide novel propositions to prove that both variants assume that as the proportion of insider bettors in the market decreases, the betting odds approach fair odds. Under this assumption, a betting market with a smaller sum of inverse odds (booksum), which has odds closer to fair odds, should have less accurate betting odds because fewer insiders participate to lower the betting odds for the successful outcome. However, we did not find a significant correlation between the booksum of a market and the accuracy of its betting odds.

The power method raises all inverse odds to the same constant power \cite{PowerConversion} so that the implied probabilities sum to the number of successful outcomes. This assumes a power law relationship \cite{PowerLawRegression} between inverse odds and probabilities, with no intercept term. However, we found that the intercept is statistically significant, which contradicts this assumption.

We propose a novel odds-only method that assumes the amount of independent bets placed on an outcome is linearly related to the inverse odds. We provide novel propositions that this assumption aligns with the bookmaker’s objective of generating the same profit regardless of the outcome. We found empirical evidence that our proposed Odds-Only-Equal-Profitability-Confidence (OO-EPC) method outperforms all existing odds-only methods. We also provide novel propositions and experimental evidence that other odds-only methods rely on assumptions with no supportive evidence, whereas our method aligns with how bookmakers set the betting odds.

On the other hand, generalised linear models utilise historical betting odds for model fitting and then convert betting odds to probabilities. We propose a novel Favourite-Longshot-Bias-Adjusted Generalised Linear Model (FL-GLM) that fits a power law relationship between the inverse odds and probabilities, but with an adaptive intercept to ensure the probabilities sum to the number of successful outcomes. Our proposed FL-GLM converts betting odds to probabilities more accurately than other existing counterparts, which are multinomial logistic regression \cite{MultinomialLogisticRegression} and ordered logistic regression \cite{OrderedLogisticRegression}. Also, our proposed FL-GLM fits just one parameter, making it significantly more interpretable than its existing counterparts, because the fitted parameter can be used to measure the amount of favourite-longshot bias that exists for the given set of historical betting odds. Furthermore, we prove that multiplicative conversion and power conversion are equivalent to our proposed FL-GLM when the parameter achieves a specific condition; we used this property to test the assumptions made by these existing odds-only methods.

We perform experiments on historical data using betting odds for 90,014 football matches across five different bookmakers from the 2012 to 2024 seasons \cite{FootballData}, which provided a large dataset for evaluation. Using this historical dataset, we evaluate the accuracy of the probabilities computed by the odds-only methods and the generalised linear models, and we also measure the draw bias, where we observed that the favourite-longshot bias is less pronounced for draw outcomes than decisive outcomes.

In addition to the experiments, predictive competitions allow us to examine how methods perform under real-world uncertainty. We have made our proposed OO-EPC method available as an open-source repository, which has been publicly acknowledged by more than 10 gold medal-winning and more than 100 medal-winning solutions. The author has applied it to his solution for the predictive modelling competition, and we provide an informal analysis of his results.

Our two novel methods both convert betting odds to probabilities, but under different data availability situations. The proposed OO-EPC method estimates the outcome probabilities directly from the betting odds, without relying on historical data, whereas the proposed FL-GLM uses historical data for model fitting and then converts betting odds to probabilities.

\section{Related Works}
\label{sec:2}

Existing methods for converting betting odds to probabilities can be divided into two groups: odds-only methods and generalised linear models.

\subsection{Odds-Only Methods}

Odds-only methods convert betting odds to probabilities without using any historical data for model fitting. This group includes the multiplicative method \cite{MultiplicativeConversion}, Shin conversion's numerical and analytical variants \cite{ShinConversionStrumbeljVariant,ShinConversionKizildemirVariant}, and the power method \cite{PowerConversion}. These methods are computationally efficient because they do not require model fitting, but are typically less accurate, since they do not utilise historical data. 

\subsubsection{Multiplicative Conversion}
\label{subsec:2.1}

Multiplicative conversion \cite{MultiplicativeConversion} converts betting odds to probabilities without using any parameters by normalising the inverse odds. But this simplicity relies on a naive assumption that the expected loss for bettors is the same for all outcomes. The multiplicative conversion is an analytical solution that is defined in Algorithm~\ref{alg:1}.

\begin{algorithm}[htbp]
\SetAlgoLined
\textbf{Input:} Number of possible outcomes denoted by $k$; Input vector of betting odds denoted by $\boldsymbol{x} = [x_1, x_2, ..., x_k]$; Number of successful outcomes denoted by $t$
    
\textbf{Output:} Output vector of implied probabilities denoted by $\boldsymbol{\hat{y}} = [\hat{y}_1, \hat{y}_2, ..., \hat{y}_k]$
    
Get the vector of inverse odds denoted by  $\boldsymbol{x}^{-1} = [x_1^{-1}, x_2^{-1}, ..., x_k^{-1}]$.
    
Compute booksum denoted by $s = \frac{1}{t}{\sum_{i=1}^{k} x_i^{-1}}$

Normalise probabilities denoted by $\boldsymbol{\hat{y}} = \frac{\boldsymbol{\boldsymbol{x}^{-1}}}{s}$

\Return{$\boldsymbol{\hat{y}}$}
    
\caption{Multiplicative Conversion}\label{alg:1}
\end{algorithm}

The multiplicative conversion assumes the expected loss for all outcomes are equal and can be expressed as $(1-\frac{t}{s})$ where $t$ and $s$ denote the number of successful outcomes and the booksum, respectively. However, this is a significant drawback because we cannot assume the expected loss for all outcomes to be equal, because the favourite-longshot bias \cite{FLBias} implies that the expected loss for bettors tends to be smaller for outcomes that have a higher probability of winning.

\subsubsection{Shin Conversion}
\label{subsec:2.2}

Shin conversion converts betting odds to probabilities without using any parameters by computing the optimal proportion of insider bettors, denoted by $z_{\alpha}$, for the bookmaker's profit maximisation under the given betting odds. There are two variants of the Shin conversion. The numerical variant of the Shin conversion \cite{ShinConversionStrumbeljVariant} assumes that there exists insider bettors for the successful outcome only and converts betting odds to probabilities numerically. Whereas \cite{ShinConversionKizildemirVariant} is a more efficient variant of the Shin conversion because it converts betting odds to probabilities analytically and assumes that there can exist insider bettors for many outcomes.

Numerical variant of Shin conversion is defined in Algorithm~\ref{alg:2}. Analytical variant of Shin conversion is defined in Algorithm~\ref{alg:3}.

\begin{algorithm}[htbp]
\SetAlgoLined
\textbf{Input:} Number of possible outcomes denoted by $k$; Input vector of betting odds denoted by $\boldsymbol{x} = [x_1, x_2, ..., x_k]$; Convergence threshold denoted by $\delta$ is $\delta = 10^
{-12}$ for our experiments; Proportion of insider bettors for previous and current iteration is denoted by $z_{\alpha}$ and $z_{\beta}$ respectively; Number of successful outcomes denoted by $t$

\textbf{Output:} Output vector of implied probabilities denoted by $\boldsymbol{\hat{y}} = [\hat{y}_1, \hat{y}_2, ..., \hat{y}_k]$

Get the vector of inverse odds denoted by  $\boldsymbol{x}^{-1} = [x_1^{-1}, x_2^{-1}, ..., x_k^{-1}]$

$z_{\alpha} = 0$. Notice we initially assume no bettors have inside information

$z_{\beta} = 1/\delta$

Compute booksum denoted by $s = {\sum_{i=1}^{k} x^{-1}_i}$

\While{$|z_{\alpha} - z_{\beta}| > \delta$}{

    $z_{\alpha} \gets z_{\beta}$
    
    $z_{\beta} \gets \frac{\sum_{i=1}^k \sqrt{z_{\alpha}^2+4(1-z_{\alpha})\frac{x_i^{-2}}{s}} - 2}{n-2}$

}

$\boldsymbol{\hat{y}} = \frac{\sqrt{z_{\beta}^2+4(1-z_{\beta})\frac{\boldsymbol{x}^{-2}}{s}} - z_{\beta}}{2(1-z_{\beta})}$

\Return{$\boldsymbol{\hat{y}}$}

\caption{Numerical variant of Shin Conversion}\label{alg:2}
\end{algorithm}

\begin{algorithm}[htbp]
\SetAlgoLined
\textbf{Input:} Number of possible outcomes denoted by $k$; Input vector of betting odds denoted by $\boldsymbol{x} = [x_1, x_2, ..., x_k]$; Proportion of insider bettors for outcome $i$ denoted by $z_i$; Number of successful outcomes denoted by $t$

\textbf{Output:} Output vector of implied probabilities denoted by $\boldsymbol{\hat{y}} = [\hat{y}_1, \hat{y}_2, ..., \hat{y}_k]$

Get the vector of inverse odds denoted by  $\boldsymbol{x}^{-1} = [x_1^{-1}, x_2^{-1}, ..., x_k^{-1}]$

Compute booksum denoted by $s = \sum_{i=1}^{k} x_i^{-1}$

Compute complement vector denoted by $c_i = x_i^{-1} - (s - x_i^{-1})$

Compute $z_i$ for each outcome denoted by $z_i = \frac{(s - 1)(c_i^2 - s)}{s (c_i^2 - 1)}$

Compute $\hat{y}_i$ for each outcome denoted by $\hat{y}_i = \frac{ \sqrt{z_i^2 + 4(1 - z_i) \frac{x_i^{-2}}{s}} - z_i }{2 (1 - z_i)}$

Compute normaliser denoted by $t^* = \frac{1}{t}{\sum_{i=1}^{k} \hat{y}_i}$

Normalise probabilities denoted by $\boldsymbol{\bar{\hat{y}}} = \frac{\boldsymbol{\hat{y}}}{t^*}$

\Return{$\boldsymbol{\bar{\hat{y}}}$}

\caption{Analytical variant of Shin Conversion}\label{alg:3}
\end{algorithm}

Both variants assume that as the proportion of insiders decreases, the betting odds approach fair odds, as proven by the novel propositions we provide in Proposition \ref{prop:1} and Proposition \ref{prop:2}. Thus, as the proportion of insiders decreases, betting markets with odds that are closer to fair odds should be less accurate because there should be less insiders in that market. However, we found no significant correlation between booksum and the accuracy of the betting odds, because as explained later in Table~\ref{tab:5}, the correlation between booksum and accuracy for all conversion methods has a p-value greater than 0.05, which does not support Shin conversion's assumption.

\begin{proposition}
As the proportion of insiders decreases, the betting odds approach fair odds under the numerical variant of the Shin Conversion, when there is one successful outcome.
\label{prop:1}
\end{proposition}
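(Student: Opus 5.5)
The plan is to invert the Shin map in Algorithm~\ref{alg:2} and send $z\to 0$. Write $\pi_i = x_i^{-1}$ and $s=\sum_{i=1}^k \pi_i$. Take $t=1$, which is the case of one successful outcome. The output of Algorithm~\ref{alg:2} for a given insider proportion $z$ is
\[
\hat{y}_i(z) = \frac{\sqrt{z^2+4(1-z)\pi_i^2/s}-z}{2(1-z)} .
\]
First I would make this formula the object of study and read it as the solution of Shin's quadratic relating true probabilities to inverse odds. Squaring $2(1-z)\hat{y}_i+z = \sqrt{z^2+4(1-z)\pi_i^2/s}$ gives
\[
(1-z)\hat{y}_i^2 + z\hat{y}_i = \pi_i^2/s .
\]
This expresses the squared inverse odds as a function of the probability: $\pi_i = \sqrt{s\bigl(z\hat{y}_i+(1-z)\hat{y}_i^2\bigr)}$. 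Fair odds correspond to $\pi_i=\hat{y}_i$ with $s=1$, so the statement amounts to showing that this relationship collapses to $\pi_i=\hat y_i$ as $z\to 0$.

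Second, I would let $z\to 0$ in the formula. The expression is continuous at $z=0$, since $1-z\neq 0$ and the square root is continuous. Therefore
\[
\hat{y}_i(z)\to \frac{\sqrt{4\pi_i^2/s}}{2}=\frac{\pi_i}{\sqrt{s}} .
\]
I would then use the fixed-point condition of Algorithm~\ref{alg:2}, namely $\sum_i \hat{y}_i = 1$, which is the normalisation for one successful outcome that the iteration on $z_\beta$ enforces. Passing to the limit in this constraint gives $\sum_i \pi_i/\sqrt{s} = \sqrt{s}=1$, so $s\to 1$. Combining the two limits gives $\pi_i \to \hat{y}_i$, that is, $x_i\to 1/\hat{y}_i$, which are the fair odds. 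Equivalently, the bookmaker's margin $s-1$ vanishes as the insider proportion vanishes.

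A cleaner way to present this is in the forward direction. Fix the true probabilities $p_i$ with $\sum_i p_i=1$. From the quadratic, Shin's model implies inverse odds
\[
\pi_i(z)=\sqrt{s(z)\bigl(zp_i+(1-z)p_i^2\bigr)},
\]
and the booksum is determined self-consistently by $s=\sum_i\pi_i$. This gives
\[
\sqrt{s(z)}=\sum_i\sqrt{zp_i+(1-z)p_i^2} .
\]
The right-hand side tends to $\sum_i p_i = 1$ as $z\to 0$. Hence $s(z)\to1$ and $\pi_i(z)\to p_i$. I would also note that $\sqrt{zp_i+(1-z)p_i^2}\ge p_i$, because $p_i\ge p_i^2$. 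It follows that $s(z)\ge 1$ with equality at $z=0$, so "approach" describes the odds decreasing toward fair odds from a positive margin.

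The main obstacle is justifying the self-consistency step: that the $z$ returned by the iteration in Algorithm~\ref{alg:2} is exactly the one making $\sum_i\hat y_i=1$. This requires identifying the update $z_\beta\gets\bigl(\sum_i\sqrt{\cdot}-2\bigr)/(k-2)$ with that normalisation. I would check this by summing the numerators $2(1-z)\hat y_i + z$ over $i$: the condition $\sum_i\hat y_i=1$ becomes $\sum_i\sqrt{\cdot}=2(1-z)+kz$, which is precisely the fixed-point equation $z(k-2)=\sum_i\sqrt{\cdot}-2$. The algorithm's $n$ must be read as $k$. With this in hand, the remaining limits are elementary continuity arguments.
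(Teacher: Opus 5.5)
Your proposal is correct. Its second paragraph is the paper's proof: let $z\to0$ in the output formula of Algorithm~\ref{alg:2} to get $x_i^{-1}/\sqrt{s}$, sum to $\sqrt{s}$, and impose $\sum_i\hat y_i=1$ to get $s=1$. The rest of your proposal adds two things the paper lacks.

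First, the paper only asserts the normalisation (``the implied probabilities must sum to 1''). You derive it, by showing that the fixed points $z\neq1$ of the update $z_\beta\gets(\sum_i\sqrt{\cdot}-2)/(k-2)$ are exactly the values of $z$ with $\sum_i\hat y_i=1$. This needs $k\ge3$, and $z=1$ is always a spurious fixed point, so read ``the $z$ returned'' as the root in $[0,1)$.

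Second, and more important, the paper holds the odds, and hence $s$, fixed while sending $z\to0$. For fixed odds, however, $z$ is pinned down by that fixed-point equation. So the paper's computation really only shows that a fitted $z=0$ forces $s=1$. Your forward parametrisation by $(p,z)$ recovers Shin's formula $\sqrt{s(z)}=\sum_i\sqrt{zp_i+(1-z)p_i^2}$. This turns ``as $z\to0$'' into a statement about a well-defined one-parameter family, and it gives the stronger conclusion $\pi_i(z)\to p_i$, not just $s\to1$. It also gives $s(z)\ge1$, with $s$ nondecreasing in $z$, so the odds approach fair odds from a positive margin, which the paper's argument does not show.
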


\begin{proof}
Using the same input and notations as Algorithm~\ref{alg:2}:
\[
x_i^{-1} = \frac{1}{x_i}, \quad \text{and} \quad s = \sum_{i=1}^k x_i^{-1}
\]

The numerical variant of Shin conversion defines the implied probabilities \( \boldsymbol{\hat{y}} \) as:
\[
\boldsymbol{\hat{y}} = \frac{\sqrt{z_{\beta}^2+4(1-z_{\beta})\frac{\boldsymbol{x}^{-2}}{s}} - z_{\beta}}{2(1-z_{\beta})}
\]

We examine the behaviour of $s$ as \( z \to 0 \). Take the limit:
\[
\lim_{z_{\beta} \to 0} \boldsymbol{\hat{y}} = \lim_{z_{\beta} \to 0} \frac{\sqrt{z_{\beta}^2+4(1-z_{\beta})\frac{\boldsymbol{x}^{-2}}{s}} - z_{\beta}}{2(1-z_{\beta})} = \frac{\sqrt{4 \cdot \frac{\boldsymbol{x}^{-2}}{s}}}{2} = \frac{\boldsymbol{x}^{-1}}{\sqrt{s}}
\]

Summing over all \( i \), we get:
\[
\lim_{z \to 0} \sum_{i=1}^k \hat{y}_i = \sum_{i=1}^k \frac{x^{-1}_i}{\sqrt{s}} = \frac{1}{\sqrt{s}} \sum_{i=1}^k x^{-1}_i = \frac{s}{\sqrt{s}} = \sqrt{s}
\]

Since the implied probabilities must sum to 1 when there is one successful outcome:
\[
\sum_{i=1}^k \hat{y}_i = 1 \quad \Rightarrow \quad \sqrt{s} = 1 \quad \Rightarrow \quad s = 1
\]

Thus, as \( z_\beta \to 0 \), we have \( s \to 1 \). In other words, as the proportion of insider bettors approaches zero, betting odds approach fair odds when there is one successful outcome, under the numerical variant of the Shin Conversion.
\end{proof}

\begin{proposition}
    As the proportion of insiders decreases, the betting odds approach fair odds under the analytical variant of the Shin Conversion.
\label{prop:2}
\end{proposition}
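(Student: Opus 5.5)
The argument follows the template of Proposition~\ref{prop:1}, now using the per-outcome formulas of Algorithm~\ref{alg:3}. The proof has two parts: show that $z_i \to 0$ corresponds to $s \to 1$, and show that the conversion then returns the inverse odds unchanged.

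\textbf{Step 1: from the insider proportion to the booksum.} Using the notation of Algorithm~\ref{alg:3}, I would work directly with the closed form
\[
z_i = \frac{(s-1)(c_i^2 - s)}{s(c_i^2-1)}, \qquad c_i = x_i^{-1} - (s - x_i^{-1}).
\]
Since $x_i^{-1} < s$, we have $c_i < x_i^{-1} \le 1$. Assuming, as for any non-degenerate market, that $c_i^2 \neq 1$ and $c_i^2 \neq s$, the factor $\frac{c_i^2 - s}{s(c_i^2-1)}$ is finite and nonzero. Hence $z_i = 0$ if and only if $s = 1$, and $z_i \to 0$ forces $s \to 1$.

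This is the main obstacle. The quantities $z_i$ are functions of $s$, not free parameters, so the phrase ``as the proportion of insiders decreases'' must be read as letting $z_i \to 0$. I must also check that the factor $(c_i^2-s)/(s(c_i^2-1))$ stays bounded away from zero and infinity near $s=1$. At $s=1$ it equals $1$, since $c_i^2 - 1$ cancels. Continuity then gives $z_i \approx s-1$ near $s = 1$, which makes the equivalence clean.

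\textbf{Step 2: the limiting probabilities.} As in the proof of Proposition~\ref{prop:1}, take $z_i \to 0$ in
\[
\hat{y}_i = \frac{\sqrt{z_i^2 + 4(1-z_i)x_i^{-2}/s} - z_i}{2(1-z_i)}.
\]
This gives $\hat{y}_i \to x_i^{-1}/\sqrt{s}$. With $s = 1$, as forced by Step 1, the limit is simply $\hat{y}_i \to x_i^{-1}$.

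\textbf{Step 3: the normalisation.} The normaliser becomes $t^* = \frac{1}{t}\sum_i x_i^{-1} = s/t$. For one successful outcome ($t=1$) this equals $s = 1$, so $\bar{\hat{y}}_i = x_i^{-1}$.

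\textbf{Conclusion.} The implied probabilities coincide with the inverse odds, i.e.\ the odds are fair. I would also remark that the same conclusion holds without Step 1 for the sum: summing the limits gives $\sqrt{s}$, and requiring the probabilities to sum to one again gives $s=1$, mirroring Proposition~\ref{prop:1}.
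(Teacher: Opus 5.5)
Your argument is sound for $t=1$, and it takes a different route from the paper. The paper never uses the closed form of $z_i$. It treats $z_i$ as a free parameter, lets $z_i\to0$ in $\hat y_i$ to get $x_i^{-1}/\sqrt{s}$, and normalises to the multiplicative conversion $t\,x_i^{-1}/s$. It then notes that these sum to $t$ and concludes $s=t$. That last step is uninformative, because $\sum_i t\,x_i^{-1}/s=t$ holds for every $s$. What the paper's computation actually shows is only that the analytical variant collapses to the multiplicative conversion as $z_i\to0$.

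Your Step 1 uses the explicit factor $(s-1)$ in $z_i$, and so supplies the real link between vanishing insiders and the booksum. The paper's proof is phrased for general $t$ while yours covers only $t=1$. Your Step 1 shows this is the right scope: Algorithm~\ref{alg:3} hard-codes $s-1$, so $z_i\to0$ forces $s\to1$ for every $t$, not $s\to t$. So state $t=1$ at the outset, as Proposition~\ref{prop:1} does, rather than introducing it in Step 3.

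For the same reason, do not present your closing remark as an independent proof. Nothing in the analytical variant forces the unnormalised $\hat y_i$ to sum to one; that is the job of $t^*$. The normalised values sum to $t$ identically, which is the same uninformative identity the paper relies on.

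Step 1 also needs tightening. Continuity at $s=1$ gives $z_i\approx s-1$ there, but it does not show that small $z_i$ forces $s$ near $1$. The factor $(c_i^2-s)/(s(c_i^2-1))$ vanishes for an extreme longshot with $2x_i^{-1}=s-\sqrt{s}$, which gives $z_i=0$ with $s\neq1$. Use a uniform bound instead. Suppose $s\ge1$ and $|c_i|<1$; by your observation $c_i<1$, this is equivalent to $x_i^{-1}>(s-1)/2$, which holds for the favourite in any realistic market. Then $s-c_i^2\ge1-c_i^2>0$, so the factor is at least $1/s$ and $z_i\ge(s-1)/s$. Hence letting the favourite's $z_i$ (or all $z_i$) tend to $0$ forces $s\to1$.
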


\begin{proof}
Using the same input and notations as Algorithm~\ref{alg:3}:
\[
x_i^{-1} = \frac{1}{x_i}, \quad \text{and} \quad s = \sum_{i=1}^k x_i^{-1}
\]

Define the complement term for each outcome as:
\[
c_i = x_i^{-1} - (s - x_i^{-1}) = 2x_i^{-1} - s
\]

The insider trading parameter \( z_i \) is computed as:
\[
z_i = \frac{(s - 1)(c_i^2 - s)}{s (c_i^2 - 1)}
\]

The intermediate implied probabilities \( \hat{y}_i \) are given by:
\[
\hat{y}_i = \frac{ \sqrt{z_i^2 + 4(1 - z_i) \frac{x_i^{-2}}{s}} - z_i }{2 (1 - z_i)}
\]

Then the normalised implied probabilities are:
\[
\bar{\hat{y}}_i = \frac{x_i^{-1}}{t^*}, \quad \text{where} \quad t^* = \frac{1}{t}{\sum_{i=1}^{k} x_i^{-1}}
\]

We examine the behaviour as \( z_i \to 0 \). Take the limit:

\[
\lim_{z_i \to 0} \hat{y}_i = \frac{\sqrt{4 \cdot \frac{x_i^{-2}}{s}}}{2} = \frac{x_i^{-1}}{\sqrt{s}}
\]

Therefore:

\[
\lim_{z_i \to 0} \sum_{i=1}^k \hat{y}_i = \sum_{i=1}^k \frac{x^{-1}_i}{\sqrt{s}} = \frac{s}{\sqrt{s}} = \sqrt{s}
\]

The normalisation factor becomes:

\[
t^* = \frac{1}{t} \sum_{i=1}^k \hat{y}_i \xrightarrow[z_i \to 0]{} \frac{\sqrt{s}}{t}
\]

Thus, the normalised probabilities are:

\[
\bar{\hat{y}}_i = \frac{\hat{y}_i}{t^*} \xrightarrow[z_i \to 0]{} \frac{\frac{x^{-1}_i}{\sqrt{s}}}{\frac{\sqrt{s}}{t}} = \frac{x^{-1}_i}{s} \cdot t
\]

Summing these probabilities:

\[
\sum_{i=1}^k \bar{\hat{y}}_i = \sum_{i=1}^k \left( \frac{x^{-1}_i}{s} \cdot t \right) = \frac{t}{s} \sum_{i=1}^k x^{-1}_i = \frac{t}{s} \cdot s = t
\]

Notice $s = t$ implies the booksum is equal to the number of successful outcomes.

Thus, as \( z_i \to 0 \), we have \( s = t \). In other words, as the proportion of insider bettors approaches zero, betting odds approach fair odds under the analytical variant of Shin Conversion.
\end{proof}

\subsubsection{Power Conversion}
\label{subsec:2.3}

Power conversion \cite{PowerConversion} raises the inverse betting odds of all outcomes to the same constant power of $\beta$ to convert betting odds to probabilities. The power conversion is a numerical solution defined in Algorithm~\ref{alg:4}, which assumes a power-law relationship between the betting odds and probabilities without an intercept term. We found empirical evidence that this is an invalid assumption because, as explained later in Table~\ref{tab:8}, the mean value of the intercept term was below 1 for all bookmakers, which contradicts the power conversion's assumption.

\begin{algorithm}[htbp]
\SetAlgoLined
\textbf{Input:} Number of possible outcomes denoted by $k$; Input vector of betting odds denoted by $\boldsymbol{x} = [x_1, x_2, ..., x_k]$; Number of successful outcomes denoted by $t$; Power constant denoted by $\beta$; Small incremental steps for power constant denoted by $\alpha$ bounded $\alpha > 0$

\textbf{Output:} Output vector of implied probabilities denoted by $\boldsymbol{\hat{y}} = [\hat{y}_1, \hat{y}_2, ..., \hat{y}_k]$

Get the vector of inverse odds denoted by  $\boldsymbol{x}^{-1} = [x_1^{-1}, x_2^{-1}, ..., x_k^{-1}]$. This is computed by $\boldsymbol{x}^{-1} = \frac{1}{\boldsymbol{x}}$

$\beta = 1.0$

\While{$\sum_{i=1}^{k} x_i^{-\beta} > t$}{
    $\beta \gets \beta + \alpha $
}

$\boldsymbol{\hat{y}} = {\boldsymbol{x}}^{-\beta}$

\Return{$\boldsymbol{\hat{y}}$}

\caption{Power Conversion}\label{alg:4}
\end{algorithm}

\subsection{Generalised Linear Models}

Generalised linear models that use historical betting data to fit a model before converting betting odds to probabilities. This group includes multinomial logistic regression \cite{MultinomialLogisticRegression}, ordered logistic regression \cite{OrderedLogisticRegression}, and a power law regression approach \cite{PowerLawRegression}. Generalised linear models typically achieve higher accuracy than odds-only methods, but require useful historical data.


Multinomial logistic regression \cite{MultinomialLogisticRegression} attempts to capture the relationship between each combination of categorical betting odds and categorical outcome. However, under EMH, betting odds has already adjusted for information about other betting odds; because all relevant information about a sports outcome is assumed to be reflected in the betting odds under EMH.

Ordered logistic regression \cite{OrderedLogisticRegression} attempts to capture the ordinal relationship between betting odds. But under EMH, betting odds have already adjusted for the ordinal relationship between betting odds.

Power law regression \cite{PowerLawRegression} models the relationship between the independent variable and dependent variable using the power function. But because the output is continuous, we cannot directly use power law regression to convert betting odds to categorical probabilities.

The final function of the Power Law Regression is $\boldsymbol{\hat{y}} = \exp(\beta_0) X^{-\beta}$ where $X$ denotes the independent variable(s); $\boldsymbol{y}$ denotes the dependent variable; $\beta_0,\beta$ denotes the coefficients; $\boldsymbol{\hat{y}}$ denotes the predicted values.

In summary, none of the existing odds-only methods convert betting odds to probabilities. To address this, we propose an odds-only method that converts betting odds to probabilities in a manner that aligns with how bookmakers set the betting odds. On the other hand, existing generalised linear models use betting odds as raw input data, which results in an attempt to capture relationships already captured by the betting odds under EMH. In contrast, we propose a generalised linear model that attempts to correct the favourite-longshot bias without attempting to capture relationships already captured by the betting odds under EMH.

\section{Proposed Methods}
\label{sec:3}

We propose two novel methods for converting betting odds into probabilities. An odds-only method that converts betting odds to probabilities that does not require any historical data for model fitting, and a generalised linear model that utilises historical data for model fitting and then converts betting odds to probabilities.

\subsection{Our Proposed Odds-Only-Equal-Profitability-Confidence Method (OO-EPC)}
\label{subsec:3.1}

\begin{algorithm}[htbp]
\SetAlgoLined
\textbf{Input:} Number of possible outcomes denoted by $k$; Input vector of betting odds denoted by $\boldsymbol{x} = [x_1, x_2, ..., x_k]$; Number of successful outcomes denoted by $t$

\textbf{Output:} Output vector of implied probabilities denoted by $\boldsymbol{\hat{y}} = [\hat{y}_1, \hat{y}_2, ..., \hat{y}_k]$

Get the vector of inverse odds denoted by  $\boldsymbol{x}^{-1} = [x_1^{-1}, x_2^{-1}, ..., x_k^{-1}]$. This is computed by $\boldsymbol{x}^{-1} = \frac{1}{\boldsymbol{x}}$

Get the vector of standard errors (SE) for inverse odds denoted by $\boldsymbol{\sigma} = [\sigma_1, \sigma_2, ..., \sigma_k]$. This is computed by $\boldsymbol{\sigma} = \sqrt{\frac{\boldsymbol{\boldsymbol{x}^{-1}}(1-\boldsymbol{\boldsymbol{x}^{-1}})}{\boldsymbol{\boldsymbol{x}^{-1}}}}$. We assume the amount of independent bets placed on an outcome has a linear relationship with the inverse odds and thus the denominator of the standard error function is $\boldsymbol{x}^{-1}$.

Get the amount of SE by which all inverse odds should be decreased to reach the implied probabilities to have equal confidence in profitability for each outcome. This is denoted by $z$. This is computed by $z = \frac{(\sum_{i=1}^{k} \boldsymbol{x}^{-1}_i) - t}{\sum_{i=1}^{k} \sigma_i}$

\If{$\forall i$ $z < \frac{\boldsymbol{x}^{-1}_{i}}{\sigma_{i}}$}{
    Get the output vector of implied probabilities. This is computed by $\boldsymbol{\hat{y}} = \boldsymbol{\boldsymbol{x}^{-1}} - z\boldsymbol{\sigma}$
}
\Else{
    Run Algorithm~\ref{alg:1} from scratch
}

\Return{$\boldsymbol{\hat{y}}$}

\caption{Odds-Only-Equal-Profitability-Confidence Method}\label{alg:8}
\end{algorithm}

We propose a novel odds-only method that reduces all inverse odds by the same amount of standard errors to convert betting odds to probabilities, so bookmakers have the same level of confidence for each outcome that the odds cannot provide a long-term profit for bettors. Specifically, we compute $z$, which is the amount of standard errors each outcome’s inverse betting odds should be reduced by such that the sum of the probabilities of all outcomes equals the number of successful outcomes. Thus, our proposed OO-EPC method analytically converts betting odds to probabilities, as defined in Algorithm~\ref{alg:8}.

The goal of bookmakers, according to \cite{sumpter2016soccermatics}, is to generate the same profit regardless of the outcome. As proven by Proposition \ref{prop:xxx}, this goal of bookmakers implies that the amount of independent bets placed on an outcome has a linear relationship with inverse odds. As a result of this implication, the denominator of the standard error function is set to the inverse odds as shown on step 4 of Algorithm \ref{alg:8}.

\begin{proposition}
    The same profit for the bookmaker regardless of the outcome implies a linear relationship between the amount of independent bets placed.
\label{prop:xxx}
\end{proposition}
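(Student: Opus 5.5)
We will prove the statement by writing the bookmaker's profit under each outcome and imposing that these profits coincide. The aim is to show that this forces the stake on outcome $i$ to be proportional to the inverse odds $x_i^{-1}$, which is the linear relationship used in step 4 of Algorithm~\ref{alg:8}. We use the notation of Algorithm~\ref{alg:8}: $k$ outcomes with decimal odds $x_1,\dots,x_k$ and inverse odds $x_i^{-1}$. Let $b_i \ge 0$ be the total amount of independent bets placed on outcome $i$, and let $B=\sum_{j=1}^k b_j$ be the total amount staked.

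We treat the case of one successful outcome ($t=1$) first. If outcome $i$ occurs, the bookmaker collects $B$ and pays out $x_i b_i$, so the profit is
\[
\Pi_i = B - x_i b_i .
\]
Requiring the same profit regardless of the outcome means $\Pi_i=\Pi$ for every $i$. This gives $x_i b_i = B-\Pi$, and hence
\[
b_i = (B-\Pi)\,x_i^{-1} .
\]
The constant $c=B-\Pi$ does not depend on $i$, so $b_i = c\,x_i^{-1}$: the amount bet on each outcome is linear in the inverse odds. Summing over $i$ gives $B = c\,s$ with $s=\sum_{i} x_i^{-1}$. Therefore $c = B/s$ and $\Pi = B(1-1/s)$, which is the same margin $1-t/s$ that appears in the discussion of Algorithm~\ref{alg:1}. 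This confirms that the constant is consistent with the booksum.

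For general $t$ successful outcomes, the payout depends on which set $S$ of outcomes succeeds, so the profit is $\Pi_S = B-\sum_{i\in S} x_i b_i$. Equal profit across all $t$-subsets $S$ requires $x_i b_i$ to be the same for all $i$. To see this, take two subsets that differ by swapping one element $i$ for another $j$; equating their profits gives $x_i b_i = x_j b_j$, provided $t<k$. The conclusion $b_i = c\,x_i^{-1}$ then follows as in the case $t=1$.

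The main obstacle is interpretive rather than computational: the statement as written leaves implicit what the bets are linear \emph{in}. We will state explicitly that the claim is linearity in the inverse odds, with zero intercept and a common slope $c$. We will also note that, since the bets are independent, the number of bets scales the same way as the amount staked. This justifies using $x_i^{-1}$ as the effective sample size in the denominator of $\sigma_i$.
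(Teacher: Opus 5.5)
Your argument is correct and is essentially the paper's: equating the bookmaker's profit across outcomes forces $b_1x_1=\cdots=b_kx_k$, hence $b_i=c\,x_i^{-1}$. The paper writes profit normalised by total stake, $1-b_jx_j/\sum_i b_i$, which is equivalent because $B$ is the same under every outcome; it then only checks that $b_i=x_i^{-1}$ satisfies the equalities. Your explicit solve for $b_i$, the identification $c=B/s$, and the swap argument for $t<k$ are valid refinements, but your closing remark that independence makes the number of bets scale with the stake silently assumes equal bet sizes and does not follow from the argument.
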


\begin{proof}
Using the same input and notations as Algorithm \ref{alg:8}:
\[
x_i^{-1} = \frac{1}{x_i}
\]
Let the vector of weights that is proportional to the amount of independent bets placed be denoted as $\boldsymbol{b} = [b_1, b_2, ..., b_k]$.

Notice that the profit for the bookmaker given outcome $j$ is successful can be computed by:
\[
1 - \frac{b_jx_j}{\sum_{i=1}^{k}{b_i}}
\]
Thus, when bookmakers generate the same profit for themselves regardless of the outcome:
\[
1 - \frac{b_1x_1}{\sum_{i=1}^{k}{b_i}} = 1 - \frac{b_2x_2}{\sum_{i=1}^{k}{b_i}} = ... = 1 - \frac{b_kx_k}{\sum_{i=1}^{k}{b_i}}
\]
Simplifies to:
\[
b_1x_1 = b_2x_2 = ... = b_kx_k
\]
Thus, when $b_i = x^{-1}_i, \ \forall i$ we get:
\[
x_1^{-1}x_1 = x^{-1}_2x_2 = ... = x^{-1}_kx_k = 1
\]
Therefore, when the bookmakers generate the same profit for themselves regardless of the outcome, the amount of independent bets placed on each outcome is proportional to the inverse odds of each outcome.

\end{proof}

Step 5 of Algorithm \ref{alg:8} intuitively means that the bookmakers have equal confidence in the profitability of each outcome's betting odds, because the inverse odds of all outcomes, $\boldsymbol{x}^{-1}$, are decreased by the same amount of each outcome's standard errors, $z\boldsymbol{\sigma}$.

The constraint on step 6 of Algorithm \ref{alg:8} intuitively requires that betting markets have a booksum slightly greater than 1. If this constraint is not satisfied in a given market, our proposed OO-EPC method will output negative probabilities, so in our experiments, we used the multiplicative method when the constraint was dissatisfied. In our historical dataset of betting odds and outcomes for 90,014 football matches across five different bookmakers from the 2012 to 2024 seasons, the constraint was satisfied for $>99.9\%$ of the time, so the proposed OO-EPC method almost never becomes equivalent to the multiplicative conversion.

\subsection{Our Proposed Favourite-Longshot-Bias-Adjusted Generalised Linear Model (FL-GLM)}
\label{subsec:3.2}

We also propose a novel generalised linear model that utilises historical data. As proven by Proposition \ref{prop:5}, the proposed FL-GLM can be viewed as a variant of the power law regression model in which the intercept term is adaptive to ensure that the output probabilities sum to the number of successful outcomes. Our proposed FL-GLM firstly raises the inverse betting odds of all outcomes to the same constant power, in a similar way to the power conversion. Secondly, it performs the multiplicative conversion to ensure the output probabilities sum to the number of successful outcomes, as defined in Algorithm~\ref{alg:9}.

\begin{algorithm}[htbp]
\caption{Favourite-Longshot-Bias-Adjusted Generalised Linear Model}\label{alg:9}
\KwIn{$X^{-1}$: column vectors of inverse betting odds for each category; $Y$: matrix of labels where each row is a one-hot vector for the outcome of the given game; $\alpha$: learning rate}
\KwOut{$\beta$: coefficient; $\bar{\hat{Y}}$: matrix of estimated probabilities}

\Repeat{convergence}{
    Compute the temporary estimated probabilities: $\hat{Y} = X^{-\beta}$;

    Normalise the temporary estimated probabilities for each row: \[
    \bar{\hat{Y}}_{ij} = \frac{\hat{Y}_{ij}}{\sum_{j=1}^{k} \hat{Y}_{ij}}
    \quad 
    \] 
    \[\text{for } i = 1, 2, \dots, n,\; j = 1, 2, \dots, k
    \]
    
    Compute log-likelihood: 
    \[
    \ln L = \sum_{i=1}^{n} \sum_{j=1}^{k} Y_{ij} \ln \bar{\hat{Y}}_{ij}
    \]
    
    Compute gradient $\nabla_{\beta} L$\;
    Update coefficients: $\beta \leftarrow \beta + \alpha \nabla_{\beta} L$\;
}

\Return{$\beta$, $\bar{\hat{Y}}$}

\end{algorithm}

\begin{proposition}
    Our proposed FL-GLM is equivalent to power law regression when $\exp(\beta_0) = \frac{1}{\sum_{j=1}^{k} X_{ij}^{-\beta}}$.
\label{prop:5}
\end{proposition}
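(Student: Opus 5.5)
The plan is a direct substitution argument: write both models row by row and show they coincide once the intercept of the power law regression is set as in Proposition~\ref{prop:5}.

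\textbf{Step 1 (power law regression, row form).} Write the power law regression for a single match $i$. Its final function, specialised to inverse odds, is
\[
\hat{y}_{ij} = \exp(\beta_0)\, X_{ij}^{-\beta}, \qquad j = 1,\dots,k .
\]
The intercept $\beta_0$ is now allowed to depend on the row $i$ (the ``adaptive intercept'').

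\textbf{Step 2 (FL-GLM, row form).} From Algorithm~\ref{alg:9}, the FL-GLM output for the same row is
\[
\bar{\hat{Y}}_{ij} = \frac{X_{ij}^{-\beta}}{\sum_{l=1}^{k} X_{il}^{-\beta}} .
\]
This is the temporary estimate $\hat{Y}_{ij} = X_{ij}^{-\beta}$ multiplied by the row-dependent factor $1/\sum_{l} X_{il}^{-\beta}$.

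\textbf{Step 3 (substitution).} Substitute $\exp(\beta_0) = 1/\sum_{j=1}^{k} X_{ij}^{-\beta}$ into the expression from Step~1. The two formulas then agree entrywise for every $i$ and $j$, with the same exponent $\beta$. This equality holds for every $\beta$.

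\textbf{Step 4 (same fitted parameter).} Both models are then the same parametric family indexed by $\beta$. Hence they share the log-likelihood $\ln L$ in Algorithm~\ref{alg:9}, and therefore the same maximiser. So the equivalence holds both as functional forms and for the fitted $\beta$.

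\textbf{Step 5 (consistency check).} Summing the constrained row over $j$ gives
\[
\exp(\beta_0)\sum_{j=1}^{k} X_{ij}^{-\beta} = 1 ,
\]
so the outputs are valid probabilities. The chosen intercept is exactly the one enforcing normalisation, which matches the ``adaptive intercept'' interpretation stated before Proposition~\ref{prop:5}. With $t$ successful outcomes, the same argument goes through with a factor $t$ in the numerator.

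\textbf{Main obstacle.} The only delicate point is interpretive rather than computational. In standard power law regression, $\beta_0$ is a single global constant. Here it must be allowed to vary per row, as a function of $\beta$ and the odds in that row. I would state this explicitly in the proof, so that the equivalence is understood as equality of the model functions under this row-wise choice of intercept, rather than equivalence with the unconstrained least-squares fit.
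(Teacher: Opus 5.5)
Your argument is correct and is essentially the paper's proof. Substitute the row-wise intercept $\exp(\beta_0) = 1/\sum_{j=1}^{k} X_{ij}^{-\beta}$ into $\exp(\beta_0)X_{ij}^{-\beta}$, and it coincides with the normalised FL-GLM output $X_{ij}^{-\beta}/\sum_{l=1}^{k} X_{il}^{-\beta}$. Two of your additions go beyond the paper without changing the proof. Your explicit flag that $\beta_0$ must vary by row is a point the paper leaves implicit. Your Step~4 claim about the fitted $\beta$ is not part of the statement, and it holds only if power law regression is also fitted under the categorical likelihood of Algorithm~\ref{alg:9}.
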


\begin{proof}
Using the same input and notations as Algorithm \ref{alg:9}, when the intercept term of the power law regression, denoted by $\exp(\beta_0)$, is replaced by an intercept term that normalises the categorical probabilities so that they sum to the number of successful outcomes, denoted by $\frac{1}{\sum_{j=1}^{k} X_{ij}^{-\beta}}$, we get:

\[\exp(\beta_0) = \frac{1}{\sum_{j=1}^{k} X_{ij}^{-\beta}}\]

When $\exp(\beta_0) = \frac{1}{\sum_{j=1}^{k} X_{ij}^{-\beta}}$, the final function of our proposed FL-GLM can be expressed as:

\[\bar{\hat{Y}}_{i} = \frac{\hat{Y}_{i}}{\sum_{j=1}^{k} \hat{Y}_{ij}} = \frac{X_{i}^{-\beta}}{\sum_{j=1}^{k} X_{ij}^{-\beta}} = \exp(\beta_0){X_{i}^{-\beta}}\] 

Notice that $\exp(\beta_0){X_{i}^{-\beta}}$ is equivalent to the final function of power law regression.
\end{proof}

Thus, Proposition \ref{prop:5} proves that our proposed FL-GLM is an extension of the existing power law regression. The intercept term of our proposed FL-GLM adaptively normalises the categorical probabilities so that they sum to the number of successful outcomes, whereas the intercept term of the existing power law regression is an explicit parameter.

Notice our proposed FL-GLM has only one parameter, whereas the multinomial and ordered logistic generalised linear models have significantly more parameters. This makes our proposed FL-GLM significantly more interpretable. And the single parameter of our proposed FL-GLM, the fitted power constant directly measure the amount of favourite-longshot bias for the given dataset. Because when $\beta = 1$, it is equivalent to the multiplicative conversion as proven by Property \ref{prop:3}, which assumes the favourite-longshot bias does not exist; when $\beta > 1$, it disproportionately reduces the probability of longshots in an attempt to adjust for the favourite-longshot bias. This implies the greater the $\beta$, the greater the favourite-longshot bias in the historical betting odds used for fitting. Also, our proposed FL-GLM becomes equivalent to the power conversion when the normalisation term is 1, as proven by Property \ref{prop:4}.

\begin{property}
    Our proposed FL-GLM is equivalent to multiplicative conversion when $\beta = 1.0$.
\label{prop:3}
\end{property}

\begin{proof} 
Using the same input and notations as step 3 Algorithm \ref{alg:9}, when $X_{i} = \boldsymbol{x} \text{ and } \beta = 1.0$, the final function of our proposed FL-GLM can be expressed as: \[\bar{\hat{Y}}_{i} = \frac{\hat{Y}_{i}}{\sum_{j=1}^{k} \hat{Y}_{ij}} = \frac{X^{-\beta}_{i}}{\sum_{j=1}^{k} X^{-\beta}_{ij}} = \frac{X^{-1}_{i}}{\sum_{j=1}^{k} X^{-1}_{ij}} = \frac{\boldsymbol{x^{-1}}}{\sum_{j=1}^{k} x^{-1}_{j}} = \boldsymbol{\hat{y}}\] 

Notice that $\frac{\boldsymbol{x^{-1}}}{\sum_{j=1}^{k} x^{-1}_{j}} = \boldsymbol{\hat{y}}$ is the final function of multiplicative conversion (Algorithm \ref{alg:1}).
\end{proof}

Thus, Property \ref{prop:3} proves that the multiplicative conversion assumes the power term of our proposed FL-GLM can be assumed to be 1.

\begin{property}
    Our proposed FL-GLM is equivalent to power conversion when $\beta$ is replaced by a vector of parameters that results in the normalisation term equal to 1.
\label{prop:4}
\end{property}

\begin{proof}
Using the same input and notations as Algorithm \ref{alg:9}, notice that for the final function of our proposed FL-GLM, when the normalisation term is removed, and $\beta$ is replaced by a vector of parameters $\beta$, we get:

\[X_i^{-\beta_i} = \bar{\hat{Y_i}}\]

Thus, when $X_{i} = \boldsymbol{x}$ and $X^{-\beta} = \bar{\hat{Y}}$, the final function of our proposed FL-GLM can be expressed as: \[\bar{\hat{Y_i}} = X_i^{-\beta_i} = \boldsymbol{x}^{-\beta} = \boldsymbol{\hat{y}}\] 

Notice $\boldsymbol{x}^{-\beta} = \boldsymbol{\hat{y}}$ is the final function of power conversion (Algorithm \ref{alg:4}).
\end{proof}

Thus, Property \ref{prop:4} proves that the power conversion assumes the normaliser term of our proposed FL-GLM can be assumed to be 1.

In summary, our proposed OO-EPC method is theoretically-driven to ensure each outcome provides the same confidence in profitability for the bookmaker;  whereas our proposed FL-GLM is data-driven to adjust for the favourite-longshot bias measured from historical betting odds data when it is available.

\section{Experiments}
\label{sec:4}

Experiments were conducted for all matches during the 2012 to 2024 football seasons that were available on \url{football-data.co.uk} \cite{FootballData}, which covered 90,014 football matches across five different bookmakers. The bookmakers were Bet365, Bet\&Win, Interwetten, Pinnacle, and William Hill.

The log-loss function is defined as:
\newline
\newline
$L(y, \hat{y}) = - \frac{1}{N} \sum_{i=1}^{N} \sum_{j=1}^{3} y_{ij} \ln(\hat{y}_{ij})$
\newline
\newline
where $N$ denotes the total number of matches in our dataset;
$j$ denotes the match outcome (home win, draw or away win);
$y_{ij}$ is an indicator variable that is 1 if the actual outcome of the $i$-th match is class $j$, and 0 otherwise;
$\hat{y}_{ij}$ denotes the predicted probability that the $i$-th match belongs to class $j$;
$\ln$ denotes the calculated natural-log.
\newline
\newline
We evaluate the estimated probabilities under the log-loss function for two main reasons. Firstly, because it is a proper scoring function for accuracy evaluation, where the function is minimised when the estimated probabilities are the true probabilities. Secondly, it captures the profitability of the estimated probabilities for bookmakers as proven by Proposition \ref{prop:6}. Thus, it allows us to ignore the difference between accuracy and profitability \cite{WhyLogLoss} because both are measured under the log-loss.

\begin{proposition}
    Log-loss captures bookmakers' profitability because it is derived from the payout ratio.
\label{prop:6}
\end{proposition}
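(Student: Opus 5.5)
The plan is to show that the per-match log-loss term equals the logarithm of a payout ratio. That ratio is the one a bookmaker faces when it prices a market using the estimated probabilities $\hat{y}_{ij}$. From that identity, averaging over matches turns the log-loss into the negative average log-growth of a bettor's wealth (equivalently, of the bookmaker's liability). So minimising log-loss is the same as maximising the bookmaker's long-run profitability.

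\textbf{Step 1: fair odds from the estimates.} Fix a match $i$ with one successful outcome, so $t=1$. Suppose the bookmaker sets fair odds from the estimates, $x_{ij}=1/\hat{y}_{ij}$, so that the booksum is $\sum_j \hat{y}_{ij}=1$. Consider a bettor who knows the true probabilities $p_{ij}$. Following the reasoning behind Proposition~\ref{prop:xxx}, the bettor spreads unit wealth across outcomes with weights $b_{ij}$ summing to $1$; the Kelly choice for a complete market with fair odds is $b_{ij}=p_{ij}$. If outcome $j$ occurs, the bettor's wealth becomes $b_{ij}x_{ij}=p_{ij}/\hat{y}_{ij}$. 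This quantity is the payout ratio: the amount the bookmaker pays out per unit staked. The bookmaker's retained fraction is therefore $1-\text{payout ratio}$, matching the profit expression in Proposition~\ref{prop:xxx}.

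\textbf{Step 2: take logarithms and sum.} The log of the payout ratio is
\[
\ln\frac{p_{ij}}{\hat{y}_{ij}} = \ln p_{ij} - \ln \hat{y}_{ij}.
\]
Sum this over realised outcomes using the indicators $y_{ij}$ and divide by $N$. The average log-payout is then
\[
L(y,\hat{y}) + \frac{1}{N}\sum_{i}\sum_j y_{ij}\ln p_{ij}.
\]
The second term does not depend on the bookmaker's estimates. Hence, up to an additive constant, the average log-payout equals the log-loss $L(y,\hat{y})$. A lower log-loss means a smaller log-growth of bettors' wealth against the bookmaker, that is, higher bookmaker profitability in the geometric-growth sense. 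As an optional check, taking expectations under $p$ turns this term into the KL divergence $\mathrm{KL}(p\,\|\,\hat{y})$. That is nonnegative and zero only when $\hat{y}=p$, which ties the argument to the propriety remark made just before the proposition.

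\textbf{Main obstacle.} The hard part is making precise what ``captures profitability'' means. The identification works in log (geometric) terms, not for arithmetic expected profit. Under fair odds, arithmetic expected profit is $1-\sum_j p_{ij}\cdot 1 = 0$ for any bettor allocation, so it cannot tell good estimates from bad ones. My approach is therefore to state explicitly that profitability means the long-run compounded return, i.e.\ the repeated-betting Kelly setting. In that setting the per-match log-payout is the correct additive quantity, and the equivalence with log-loss holds exactly up to the constant term. If there is a margin ($s>1$), I would absorb it as an additive $\ln s$ term, which again does not affect the comparison between estimators.
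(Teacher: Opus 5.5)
Your argument is correct, and it rests on the same identity as the paper's proof: for the realised outcome, $-\ln\hat{y}_{ij}=\ln x_{ij}$, so a difference of log-losses exponentiates to a ratio of decimal odds. You deploy that identity differently, though.

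The paper's proof is purely relative and needs no model of the truth. It takes two bookmakers $A$ and $B$, scores their raw inverse odds (margin included), and notes that the difference of their per-outcome log-losses is, up to sign convention, $\ln(x_B/x_A)$, the log of their relative payout ratio.

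You instead benchmark a single normalised book against the unobservable true probabilities via a Kelly bettor. Your ratio $p_{ij}/\hat{y}_{ij}$ is exactly the paper's $x_B/x_A$, with $A$ taken to be the fictitious book that is fair under $p$. If you compare two estimators in your setup, the $\ln p_{ij}$ terms cancel and you recover the paper's pairwise statement.

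Your route buys three things:
\begin{itemize}
\item an operational reading: log-loss is, up to an estimator-independent constant, the long-run log-growth of an informed bettor against the book;
\item the link to $\mathrm{KL}(p\,\|\,\hat{y})$ and hence to propriety;
\item an explicit acknowledgement that exponentiating a \emph{mean} log-loss difference gives a geometric-mean payout ratio, which the paper's closing phrase ``mean payout ratio'' glosses over.
\end{itemize}
The paper's route buys brevity and handles books with different margins automatically. You need the normalisation, the Kelly stake and a separate $\ln s$ term. That term is harmless when comparing estimators on the same odds. It must be kept when comparing bookmakers with different margins, which is the setting of the paper's proof.

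One side remark in your ``main obstacle'' paragraph is wrong, though nothing rests on it. With odds $1/\hat{y}_{ij}$, the arithmetic expected payout under $p$ is $\sum_j p_{ij}b_{ij}/\hat{y}_{ij}$. Your expression $1-\sum_j p_{ij}\cdot 1$ implicitly uses $b_{ij}x_{ij}=1$, i.e.\ the balanced stake $b_{ij}=\hat{y}_{ij}$. The payout equals $1$ for every allocation only when $\hat{y}_{ij}=p_{ij}$ for all $j$. For your Kelly stake it is $\sum_j p_{ij}^2/\hat{y}_{ij}\ge 1$, with equality if and only if $\hat{y}_{ij}=p_{ij}$ for all $j$. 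So arithmetic profit does discriminate between estimators, through a $\chi^2$-type divergence rather than KL. The right motivation for working in log terms is simply that this is where the log-loss appears exactly.
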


\begin{proof}
For a given winning outcome, let $x_A$ and $x_B$ denote the betting odds for bookmakers $A$ and $B$, respectively. Then, the payout ratio of bookmaker $A$ relative to bookmaker $B$ is expressed as:
\[u = x_B/x_A.\]

Thus, payout per winning unit for bookmaker $B$ is $u$-times that of bookmaker $A$, relative to bookmaker $B$ is $u$.

The difference in log-loss between bookmaker $A$ and $B$ for the given outcome can be expressed as: \[v = -\ln(x^{-1}_B) + \ln(x^{-1}_A).\]

For the given outcome, the difference in log-loss between bookmaker $A$ and $B$ can be used to derive the payout ratio of bookmaker $A$ relative to bookmaker $B$ because we can derive $u$ from $v$ by: \[u = \exp(-v).\]

Thus, the difference in mean log-loss between bookmaker $A$ and $B$ can be used to derive the mean payout ratio between bookmaker $A$ and $B$ under fair odds.
\end{proof}

The remainder is organised as follows: Section \ref{subsubsec:4.1} evaluates the accuracy of our proposed methods. We evaluate the estimated probabilities implied by our proposed OO-EPC method in comparison to those implied by existing counterparts, as well as those implied by our proposed FL-GLM in comparison to its existing counterparts. Section \ref{subsubsec:4.2} tests the assumptions made by existing odds-only methods. We test the Shin conversion's assumption that bookmakers with smaller expected losses imply less accurate betting odds. We also test the multiplicative and power conversions' assumptions by testing whether our proposed FL-GLM’s power constant and normaliser terms can be assumed to be 1, respectively. Section \ref{subsubsec:4.4} test the draw bias noticed by \cite{ShinConversionStrumbeljVariant} for odds-only methods and generalised linear models.

\subsection{Experiment for Accuracy of Methods}
\label{subsubsec:4.1}

Table~\ref{tab:joined} provides the mean log-loss under natural-log of the converted probabilities for each odds-only method across the five bookmakers, to compare the accuracy of our proposed OO-EPC method against the existing odds-only methods. A two-tailed bootstrap significance test at the 0.05 level was used to identify significant differences.

Our proposed OO-EPC method had a significantly superior log-loss than all other odds-only methods for the majority of the five bookmakers. Numerical variant of Shin and Power conversions significantly outperformed our proposed OO-EPC method for the Bet365 bookmaker. Power conversion significantly outperformed our proposed odds-only for the William Hill bookmaker as well.

\begin{table*}[htbp]
\centering
\resizebox{\textwidth}{!}{%
\begin{tabular}{|l|c|c|c|c|c|}
\hline
\textbf{Odds-Only Method} & 
\makecell{\textbf{Bet365}} & 
\makecell{\textbf{Bet\&Win}} &
\makecell{\textbf{Interwetten}} & 
\makecell{\textbf{Pinnacle}} &
\makecell{\textbf{William Hill}} \\
\hline
Multiplicative & 1.00372 & 1.00417 & 1.00575 & 1.00449 & 1.00425 \\
Numerical Shin & \textbf{1.00336} & 1.00356 & 1.00477 & 1.00432 & 1.00360 \\
Analytical Shin & 1.00362 & 1.00368 & 1.00479 & 1.00430 & 1.00404 \\
Power & \textbf{1.00334} & 1.00350 & 1.00455 & 1.00431 & \textbf{1.00349} \\
Our Proposed & 1.00341 & \textbf{1.00349} & \textbf{1.00449} & \textbf{1.00428} & 1.00359 \\
\hline
\end{tabular}%
}
\caption{For each \textbf{Odds-Only Method}, the mean log-loss under natural-log of the converted probabilities was computed for each of the five bookmakers. Log-loss scores in bold indicate when our proposed OO-EPC method significantly outperformed other odds-only methods or when other odds-only method(s) significantly outperformed our proposed OO-EPC method. We used a two-tailed bootstrap significance test at a 0.05 threshold.}
\label{tab:joined}
\end{table*}

Table~\ref{tab:joined_bootstrap_regression} provides the mean log-loss under natural-log of the estimated probabilities for each generalised linear model across the five bookmakers, to compare the accuracy of our proposed FL-GLM against the existing generalised linear models. A two-tailed bootstrap significance test at the 0.05 level was used to identify significant differences.

Our proposed FL-GLM had a significantly superior log-loss than all other generalised linear models for all five bookmakers. This outperformance suggests that our regression approach, adjusts for the favourite-longshot bias more effectively than the existing generalised linear models.

\begin{table*}[htbp]
\centering
\resizebox{\textwidth}{!}{%
\begin{tabular}{|l|c|c|c|c|c|}
\hline
\textbf{Generalised Linear Model} & 
\makecell{\textbf{Bet365}} & 
\makecell{\textbf{Bet\&Win}} &
\makecell{\textbf{Interwetten}} & 
\makecell{\textbf{Pinnacle}} &
\makecell{\textbf{William Hill}} \\
\hline
Multinomial Logistic & 1.00573 & 1.00601 & 1.00550 & 1.00557 & 1.00473 \\
Ordered Logistic & 1.00722 & 1.00766 & 1.00684 & 1.00680 & 1.00637 \\
Our Proposed & \textbf{1.00375} & \textbf{1.00404} & \textbf{1.00333} & \textbf{1.00306} & \textbf{1.00239} \\
\hline
\end{tabular} 
}
\caption{For each \textbf{generalised linear model}, the mean log-loss under natural-log of the estimated probabilities was computed for each of the five bookmakers. Log-loss scores in bold indicate when our proposed FL-GLM significantly outperformed all other generalised linear models. We used a two-tailed bootstrap significance test at a 0.05 threshold.}
\label{tab:joined_bootstrap_regression}
\end{table*}

\subsection{Experiment for Assumptions of Existing Odds-Only Methods}
\label{subsubsec:4.2}

Table~\ref{tab:5} provides the correlation between accuracy of converted probabilities and the bookmakers’ margins under each odds-only method. For each combination of bookmaker and season, we calculated the correlation between mean log-loss under natural-log of the converted probabilities and the average booksum. The table provides the correlation coefficient between log-loss and booksum for each odds-only method, along with the corresponding p-value for significance.

\begin{table}[htbp]
    \centering
    \resizebox{\columnwidth}{!}{%
    \begin{tabular}{|l|c|c|}
    \hline
    \textbf{Odds-Only Method} & \makecell{\textbf{Correlation between} \\ \textbf{Log-Loss and Booksum}} & \textbf{p-value} \\
    \hline
    Multiplicative & 0.1085 & 0.4093 \\
    Numerical variant of Shin & 0.0744 & 0.5723 \\
    Analytical variant of Shin & 0.0828 & 0.5294 \\
    Power & 0.0672 & 0.6097 \\
    Our Proposed & 0.0661 & 0.6159 \\
    \hline
    \end{tabular}%
    }
    \caption{Under each \textbf{Odds-Only Method}, the mean log-loss under natural-log of the converted probabilities and the mean booksum were computed for each combination of bookmaker and season. The correlation between the log-loss and booksum is displayed in the \textbf{Correlation between Log-Loss and booksum} column. The significance of the correlation is displayed in the \textbf{p-value} column.}
    \label{tab:5}
\end{table}

All odds-only methods have small positive correlation coefficients, with p-values significantly greater than 0.05, and thus none are statistically significant. These results imply that we have no evidence to support Shin conversion's assumption that markets with fewer insider bettors, result in smaller booksums, have less accurate betting odds. In fact, because the correlation coefficients are all small positives, we have insignificant evidence against Shin conversion's assumption.

Table~\ref{tab:8} provides the key parameters of our proposed favourite-longshot-bias-adjusted regression model for each bookmaker. It shows the fitted power constant $\beta$ for each bookmaker and the mean value of the normalisation term, which is the average of $\sum_{j} \hat{Y}_{ij}$, for that bookmaker. We use Property \ref{prop:3} and Property \ref{prop:4} of our proposed favourite-longshot-bias-adjusted regression to test the assumptions made by the multiplicative and power conversions; respectively.

\begin{table}[htbp]
    \centering
    \resizebox{\columnwidth}{!}{%
    \begin{tabular}{|l|c|c|}
    \hline
    \textbf{Bookmaker} & 
    \makecell{\textbf{Output $\beta$ of} \\ \textbf{FL-GLM}} &
    \makecell{\textbf{Mean $\sum_{j=1}^{k} \hat{Y}_{ij}$ of} \\ \textbf{FL-GLM}} \\
    \hline
    Bet365 & 1.08 & 0.98 \\
    Bet\&Win   & 1.12 & 0.96 \\
    Interwetten   & 1.15 & 0.95 \\
    Pinnacle   & 1.06 & 0.97 \\
    William Hill & 1.12 & 0.96 \\
    \hline
    \end{tabular}%
    }
    \caption{For each \textbf{Bookmaker}, the $\beta$ value of our fitted FL-GLM is displayed on the \textbf{Optimal $\beta$ of FL-GLM} column, and the mean value of the final normaliser term of our fitted proposed FL-GLM is displayed on the \textbf{Mean $\sum_{j=1}^{k} \hat{Y}_{ij}$ of FL-GLM} column.}
    \label{tab:8}
\end{table}

The power constant $\beta$ is above 1.0 for every bookmaker, ranging from 1.06 to 1.15. The normalisation term is below 1.0 in all cases, ranging from 0.95 to 0.98. These results imply limitations of the multiplicative and power conversion methods in adjusting for the biases in betting odds. Because the multiplicative conversion assumes $\beta = 1$, and the power conversion assumes the normaliser equals 1. In contrast, our findings confirm the existence of the favourite-longshot bias, since $\beta > 1$ for all five bookmakers, and the significance of an intercept term for the power-law relationship between betting odds and probabilities, since the mean value of the normalisation term is less than 1 for all five bookmakers.

\subsection{Experiment for Draw Bias in Methods}
\label{subsubsec:4.4}

Figure~\ref{fig:draws_chart} illustrates the expected drawn matches implied by each odds-only method and the actual number of drawn matches, for each of the five bookmakers' datasets of historical betting odds.

\definecolor{myblue}{RGB}{70, 130, 180}
\definecolor{mygreen}{RGB}{46, 139, 87}
\definecolor{myorange}{RGB}{255, 127, 80}
\definecolor{myred}{RGB}{190, 30, 45}
\definecolor{mycyan}{RGB}{0, 150, 165}
\definecolor{mymagenta}{RGB}{150, 75, 130}
\definecolor{mydarkyellow}{RGB}{204, 153, 0}

\begin{figure*}[htbp]
\centering
\begin{tikzpicture}
\begin{axis}[
    ybar,
    area legend,
    enlarge x limits=0.15,
    legend style={at={(0.5,-0.2)}, anchor=north, legend columns=2},
    ylabel={Number of Draws},
    symbolic x coords={Bet365, Bet\&Win, Interwetten, Pinnacle, William Hill},
    xtick=data,
    scaled y ticks=false,
    yticklabel style={/pgf/number format/fixed, /pgf/number format/1000 sep={}},
    nodes near coords,
    every node near coord/.append style={rotate=90, anchor=west, font=\small, text=black, /pgf/number format/1000 sep={}},
    ymin=21000, ymax=24500, 
    width=\textwidth,
    height=8cm,
    bar width=7pt,
    title={Comparison of Expected Number of Draws implied by Each Odds-Only Method and \\ Actual Number of Draws under each Bookmaker},
    title style={align=center},
    ymajorgrids=true,
    grid style=dashed,
]

\addplot[fill=mygreen, draw=black!80] coordinates {(Bet365,23575.9) (Bet\&Win,23583.6) (Interwetten,22812.8) (Pinnacle,23373.6) (William Hill,23583.8)};

\addplot[fill=mycyan, draw=black!100] coordinates {(Bet365,23299.6) (Bet\&Win,23242.2) (Interwetten,22450.6) (Pinnacle,23226.6) (William Hill,23266.9)};

\addplot[fill=mymagenta, draw=black!100] coordinates {(Bet365,23006.8) (Bet\&Win,22871.7) (Interwetten,22051.6) (Pinnacle,23075.0) (William Hill,22925.9)};

\addplot[fill=mydarkyellow, draw=black!100] coordinates {(Bet365,23119.0) (Bet\&Win,23029.8) (Interwetten,22221.4) (Pinnacle,23133.6) (William Hill,23054.0)};

\addplot[fill=black!50, draw=black!100] coordinates {(Bet365,23095.3) (Bet\&Win,22985.6) (Interwetten,22176.2) (Pinnacle,23120.1) (William Hill,23033.7)};

\addplot[fill=black!100] coordinates {(Bet365,23649) (Bet\&Win,23534) (Interwetten,22528) (Pinnacle,23414) (William Hill,23224)};

\legend{Multiplicative, Numerical variant of Shin, Analytical variant of Shin, Power, Ours (OO-EPC), Actual Draws}
\end{axis}
\end{tikzpicture}
\caption{Comparison of expected \textbf{Number of Draws} implied by each \textbf{Odds-Only Method} and the Actual Draws under each bookmaker. Each given group of six bars represent an evaluation under the given bookmaker; each given colour represents the amount of expected draws implied by the given odds-only method or the actual number of draws.}
\label{fig:draws_chart}
\end{figure*}
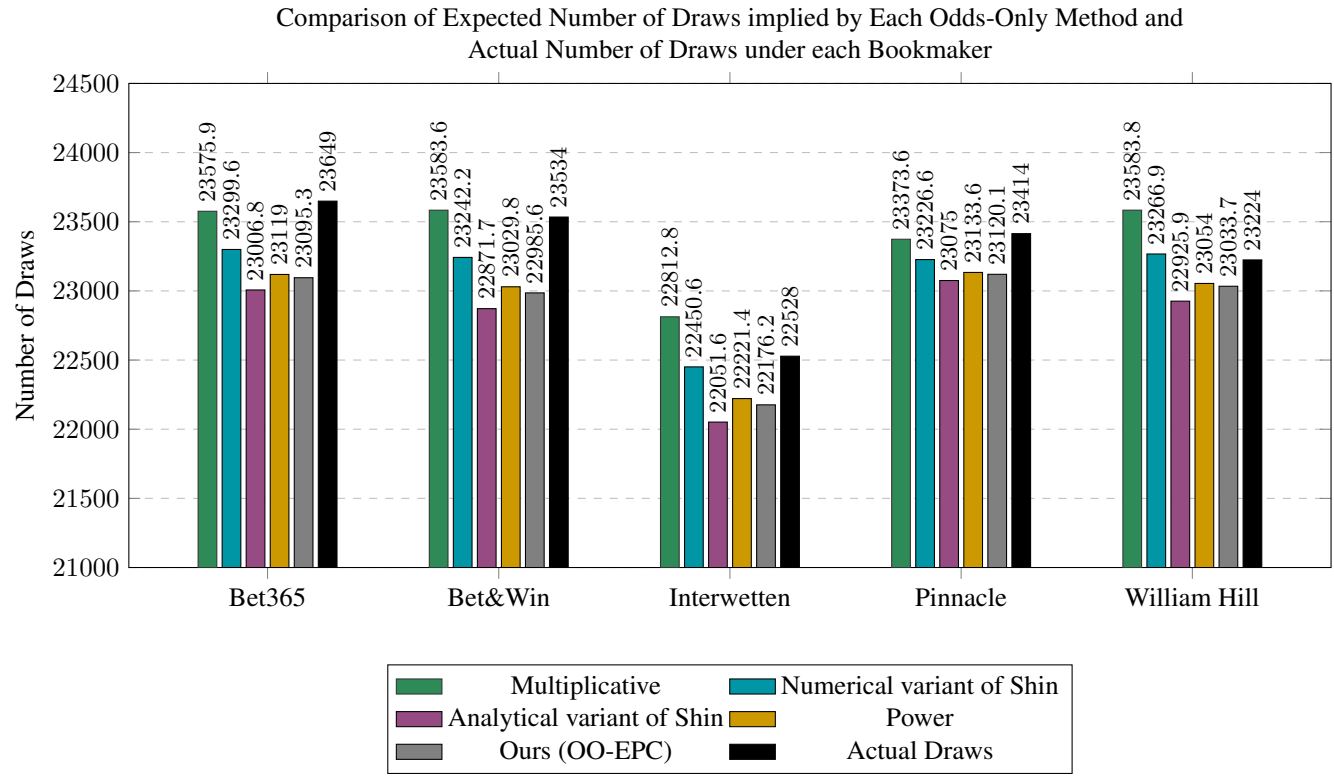

All the odds-only methods that attempt to adjust for the favourite-longshot bias, which are numerical Shin, analytical Shin, power, and our proposed method, tend to underestimate the number of draws. 

Based on a two-tailed Poisson significance test at the 0.05 level, we found that for Bet 365, numerical Shin, analytical Shin, power, and our proposed method implied significantly fewer number of draws than the actual number. For Bet\&Win and Interwetten, it was analytical Shin, power, and our proposed method. For Pinnacle and William Hill, it was analytical Shin.

In contrast, the expected number of draws implied by the multiplicative method, which does not adjust for the favourite–longshot bias, does not significantly differ from the actual number of draws observed for all five bookmakers.

We hypothesise that this implies that draw outcomes have a weaker favourite–longshot bias than decisive outcomes, where methods that attempt to adjust for the favourite-longshot bias underestimate draw probabilities because draw outcomes tend to be longshots, and the favourite-longshot bias for draw outcomes is overestimated by those methods.

Figure~\ref{fig:regression_draws_chart} illustrates the expected drawn matches implied by each generalised linear model and the actual number of drawn matches in each of the five bookmakers' datasets of historical betting odds.

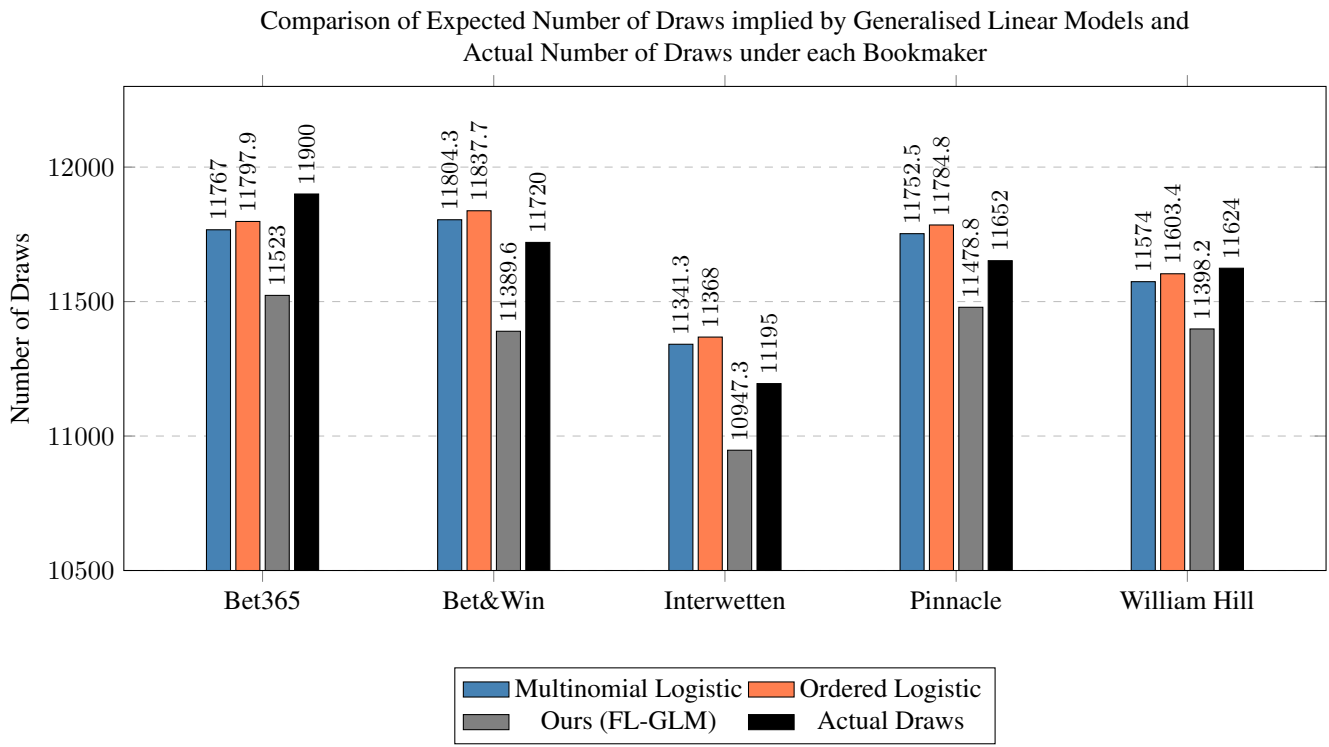
\begin{figure*}[htbp]
\centering
\begin{tikzpicture}
\begin{axis}[
    ybar,
    area legend,
    enlarge x limits=0.15,
    legend style={at={(0.5,-0.2)}, anchor=north, legend columns=2},
    ylabel={Number of Draws},
    symbolic x coords={Bet365, Bet\&Win, Interwetten, Pinnacle, William Hill},
    xtick=data,
    scaled y ticks=false,
    yticklabel style={/pgf/number format/fixed, /pgf/number format/1000 sep={}},
    nodes near coords,
    every node near coord/.append style={rotate=90, anchor=west, font=\small, text=black, /pgf/number format/1000 sep={}},
    ymin=10500, ymax=12300, 
    width=\textwidth,
    height=8cm,
    bar width=9pt, 
    title={Comparison of Expected Number of Draws implied by Generalised Linear Models and \\ Actual Number of Draws under each Bookmaker},
    title style={align=center},
    ymajorgrids=true,
    grid style=dashed,
]

\addplot[fill=myblue, draw=black!100] coordinates {(Bet365,11767.0) (Bet\&Win,11804.3) (Interwetten,11341.3) (Pinnacle,11752.5) (William Hill,11574.0)};

\addplot[fill=myorange, draw=black!100] coordinates {(Bet365,11797.9) (Bet\&Win,11837.7) (Interwetten,11368.0) (Pinnacle,11784.8) (William Hill,11603.4)};

\addplot[fill=black!50, draw=black!100] coordinates {(Bet365,11523.0) (Bet\&Win,11389.6) (Interwetten,10947.3) (Pinnacle,11478.8) (William Hill,11398.2)};

\addplot[fill=black!100] coordinates {(Bet365,11900) (Bet\&Win,11720) (Interwetten,11195) (Pinnacle,11652) (William Hill,11624)};

\legend{Multinomial Logistic, Ordered Logistic, Ours (FL-GLM), Actual Draws}
\end{axis}
\end{tikzpicture}
\caption{Comparison of expected \textbf{Number of Draws} implied by each \textbf{Generalised Linear Model} and the Actual Draws under each bookmaker. Each given group of four bars represent an evaluation under the given bookmaker; each given colour represents the amount of expected draws implied by the given generalised linear model or the actual number of draws.}
\label{fig:regression_draws_chart}
\end{figure*}

Our proposed FL-GLM tends to underestimate the probability of draw outcomes. Based on a two-tailed Poisson significance test at the 0.05 level, the expected number of draws implied by our proposed FL-GLM is significantly lower than the actual number of draws for Bet365, Bet\&Win, Interwetten and William Hill. In contrast, the expected number of draws implied by the multinomial and ordered logistic generalised linear models does not significantly differ from the actual number of draws observed for all five bookmakers.

We hypothesise that draw outcomes have a weaker favourite–longshot bias than decisive outcomes. And because our proposed favourite-longshot-bias-adjusted generalised linear model applies a uniform favourite-longshot bias adjustment across all outcomes, we underestimate draw probabilities because draw outcomes tend to be longshots. Thus, we could enhance our proposed FL-GLM by fitting a power constant for draw outcomes separately from decisive outcomes to explicitly adjust for the weaker favourite-longshot bias for draw outcomes.

\section{Discussions}
\label{sec:5}

In this section, we discuss the results of our experiments in Section \ref{subsec:experiments}, future work that can be done to significantly enhance our work in Section \ref{subsec:futureWork}, and results from public forecasting competitions to examine our odds-only method's applicability under real-world uncertainty in Section \ref{subsec:4.2}.

\subsection{Discussion for Experiments}
\label{subsec:experiments}

Our proposed OO-EPC method outperforms existing counterparts for most bookmakers, as shown in \autoref{tab:joined}. We hypothesise that this is because other odds-only methods make assumptions with no supportive evidence, whereas our proposed OO-EPC method's assumption aligns with bookmakers' goals, as proven by Proposition \ref{prop:xxx}.

Proposition \ref{prop:1} and Proposition \ref{prop:2} prove that the numerical variant and analytical variant of the Shin conversion assume that bookmakers with smaller expected losses for bettors have less accurate betting odds. However, in \autoref{tab:5}, we found no statistically significant correlation between average booksum and mean log-loss. Thus, we found no evidence supporting the assumptions made by the numerical and analytical variants of the Shin conversion.

Property \ref{prop:3} proved that the multiplicative conversion is equivalent to the proposed FL-GLM when the power constant is assumed to be 1, and Property \ref{prop:4} proved that the power conversion is equivalent to the proposed FL-GLM when the normaliser is assumed to be 1. However, on \autoref{tab:8}, we found that the optimal power constant for our proposed FL-GLM is greater than 1 for all bookmakers, and thus we have no evidence supporting the assumption made by multiplicative conversion. Also, on \autoref{tab:8}, we found that the mean normaliser is less than 1 for all bookmakers, and thus we have no evidence supporting the assumption made by the power conversion.

The favourite-longshot bias exists to some extent for all bookmakers because the optimal power constant for our proposed FL-GLM is greater than 1 for all bookmakers on \autoref{tab:8}. Larger optimal power constants imply a stronger favourite-longshot bias because the multiplicative conversion assumes that the favourite-longshot bias does not exist and is equivalent to the proposed FL-GLM when the optimal power constant is 1. Therefore, the optimal power constant values for our proposed FL-GLM can be used to compare the extent of the favourite-longshot bias between groups of betting odds. For example, we could measure the optimal power constant values for different time periods to observe how the favourite-longshot bias changes over time.

All odds-only methods that attempt to adjust for the favourite-longshot bias tend to underestimate the probability of draws, as shown on Figure~\ref{fig:draws_chart}. These methods assume the favourite-longshot bias exists to the same extent for all outcomes, including draws. Expected number of draws implied by the multiplicative conversion, which does not adjust for the bias, tends not to significantly deviate from the actual number of draws. This suggests that the favourite-longshot bias is weaker for draw outcomes than for decisive outcomes.

Our proposed FL-GLM outperforms both multinomial and ordered logistic regression, as shown on \autoref{tab:joined_bootstrap_regression}. We hypothesise that this is because both existing counterparts attempt to capture relationships that the betting odds already capture under EMH.

However, our proposed FL-GLM underestimates the probability of draws for all bookmakers, significantly for four out of the five, as shown in Figure~\ref{fig:regression_draws_chart}. On the other hand, the two existing logistic models do not show a significant draw bias. We hypothesise that this is because the logistic regression models effectively use separate parameters for each outcome and thus can fit betting odds for draws separately; whereas our proposed FL-GLM uses one global parameter $\beta$ for all outcomes, and cannot adjust for the weaker favourite-longshot bias for draw outcomes.

\subsection{Discussion for Future Work}
\label{subsec:futureWork}

Future work is needed to alleviate the draw bias that exists in all bias-adjusted odds-only methods shown on Figure~\ref{fig:draws_chart} and in our proposed FL-GLM shown on Figure~\ref{fig:regression_draws_chart}. We hypothesise that an extension of our proposed FL-GLM could address this issue. Adjustments for other biases in betting odds, such as differences in market participation \cite{SportsBettingMarketWisdom} and match attendance \cite{BettingOddsBiases}, also require future work.

One extension of our FL-GLM is to have two power constants. One for the draw outcome and the other for the decisive outcomes. We should return the maximum likelihood estimate of the power constants. If our hypothesis that the favourite-longshot bias is weaker in betting odds for draws than decisive outcomes is true, then the power constant for the draw outcome should be smaller than the power constant for the decisive outcome in the maximum likelihood estimate of power constants. The likelihood function can be defined as:
\newline
\newline
$X_{decisive}$ denotes column vectors of betting odds for decisive outcomes of each game
\newline
\newline
$X_{draw}$ denotes column vectors of betting odds for the draw outcome of each game
\newline
\newline
$\beta_{decisive}$ denotes the power constant parameter for decisive outcomes
\newline
\newline 
$\beta_{draw}$ denotes the power constant parameter for the draw outcome
\newline
\newline
We assume $X_{decisive}$ and $X_{draw}$ have same ordering of games
\[
\mathcal{D} = \left\{ X_{decisive}, X_{draw} \right\}
\]
\[
\hat{Y}_{decisive} = X_{decisive}^{\beta_{decisive}} 
\]
\[
\hat{Y}_{draw} = X_{draw}^{\beta_{draw}} 
\]
\[
\hat{Y} = 
\begin{bmatrix}
\hat{Y}_{decisive} & \hat{Y}_{draw}
\end{bmatrix}
\]
\[
\bar{\hat{Y}}_{ij} = \frac{\hat{Y}_{ij}}{\sum_{k=1}^{n} \hat{Y}_{ik}}
\quad 
\] 
\[\text{for } i = 1, 2, \dots, m,\; j = 1, 2, \dots, n
\]
\newline
\newline
$p(\mathcal{D} \mid \left\{\beta_{decisive}, \beta_{draw}\right\}) = \sum_{i=1}^{n} \sum_{j=1}^{k} Y_{ij} \ln \bar{\hat{Y}}_{ij}$ denotes the likelihood function.

\subsection{Informal Discussion for Predictive Modelling Competition Results}
\label{subsec:4.2}

Kaggle is an online platform that hosts predictive modelling competitions and allows participants to share open-source repositories with a global community. To evaluate our proposed OO-EPC method under real-world uncertainty, it has been available to the Kaggle community as an open-source repository \cite{GotoConversion} from 2024 onwards, and we used it in our solution since 2019 \cite{GotoConversionKaggle}.

Kaggle \cite{Kaggle} hosts an annual competition where participants submit the estimated probability of each game outcome for every possible match-up during both the men's and women's annual National Collegiate Athletic Association (NCAA) basketball tournaments. In 2024 \cite{Kaggle2024}, the format was slightly different, where participants essentially estimated the probabilities of each team progressing to each stage of the tournament. The submission deadline is before the tournaments begin to reflect real-world uncertainty.

Our proposed OO-EPC method has been publicly acknowledged by more than 10 gold medal-winning and more than 100 medal-winning solutions \cite{GotoConversion}. For six iterations, we used the proposed OO-EPC method in our solution, and the results are analysed in this section. Because we performed additional data processing to remain consistent with the competition format, we discuss how our proposed OO-EPC method is practically useful as a component of Kaggle competition solutions, but we cannot state causal relationships between using our proposed OO-EPC method and Kaggle competition results. Thus, the analysis is not a formal controlled experiment but rather an informal analysis of a solution that uses our proposed OO-EPC method under real-world uncertainty.

Section \ref{subsubsec:4.2.1} tests whether our solution provides a significant improvement in achieving top 10\% finishes. Section \ref{subsubsec:4.2.2} tests whether our solution provides a significant improvement in winning medals. Section \ref{subsubsec:4.2.3} compares the predictability of the men's and women's tournament game outcomes.

\subsubsection{Predictive Modelling Competition Results by Percentile}
\label{subsubsec:4.2.1}

Table~\ref{tab:19} provides the medals and percentiles achieved by our solution for Kaggle’s annual college basketball prediction competitions from 2019 to 2025. For each competition iteration, the table shows our percentile ranking on the leaderboard. We assume the average participant has a uniform probability of placing anywhere on the leaderboard.

Our solution achieved a top 10\% finish in 5 out of the 9 competitions listed. The probability of an average participant achieving 5 or more top 10\% finishes in 9 independent attempts is about $0.0009$ under a binomial distribution hypothesis test. This provides evidence that the frequency of top 10\% finishes achieved by our solution, which uses our proposed OO-EPC method, is statistically significant.

Thus, we have evidence that our solution results in top 10\% finishes significantly more frequently than the average participant, as shown in Table~\ref{tab:19}. 

\begin{table}[htbp]
    \centering
    \resizebox{0.7\columnwidth}{!}
    {%
    \begin{tabular}{|l|c|}
    \hline
    \textbf{Iteration} & \textbf{Result} \\
    \hline
    2019 Mens & No Medal \\
    2019 Womens & Silver Medal (95th percentile) \\
    2021 Mens & No Medal \\
    2021 Womens & Silver Medal (90th percentile) \\
    2022 Mens & No Medal \\
    2022 Womens & Gold Medal (98th percentile) \\
    2023 Combined & No Medal \\
    2024 Combined & Bronze Medal (91st percentile) \\
    2025 Combined & Bronze Medal (90th percentile) \\
    \hline
    \end{tabular}%
    }
    \caption{For each \textbf{Iteration} of Kaggle's annual college basketball prediction competition, the medal we won and our final leaderboard percentile are displayed in the \textbf{Result} column.}
    \label{tab:19}
\end{table}

\subsubsection{Predictive Modelling Competition Results by Medal Line}
\label{subsubsec:4.2.2}

Table~\ref{tab:20} provides the success rate for different criteria using our solution for Kaggle’s annual college basketball prediction competitions from 2019 to 2025. We defined our six success criteria to align with the medal thresholds, because the typical motivation of Kaggle participants is to win medals. We assume that an average participant has a uniform probability of placing anywhere on the leaderboard, and for each success criterion, the table shows the average participant achieving our success rate.
    
We performed two-tailed binomial distribution hypothesis tests at a 0.05 threshold. For the medal and silver medal criteria, we have significant evidence against the null hypothesis. For the gold medal criteria, we still have evidence against the null hypothesis, but it is insignificant.

Thus, we have significant evidence that our solution, which uses our proposed OO-EPC method, has a superior success rate for winning bronze and silver medals compared to the average participant. Although evidence for the success rate of winning gold medals is statistically insignificant, our solution still outperforms the success rate of the average participant for winning gold medals.

\begin{table}[htbp]
    \centering
    \resizebox{\columnwidth}{!}
    {%
    \begin{tabular}{|l|c|c|}
    \hline
    \textbf{Success Criteria} & \textbf{Success Rate} & \textbf{p-value for Average Participant} \\
    \hline
    Medals & 5/9 = 55.6\% & 0.0038 \\
    Silver Medals & 3/9 = 33.3\% & 0.0229 \\
    Gold Medals & 1/9 = 11.1\% & 0.1225 \\
    Medals for Women’s included Iterations & 5/6 = 83.3\% & 0.0005 \\
    Silver Medals for Women’s included Iterations & 3/6 = 50.0\% & 0.0080 \\
    Gold Medals for Women’s included Iterations & 1/6 = 16.7\% & 0.0867 \\
    \hline
    \end{tabular}%
    }
    \caption{For each \textbf{Success Criteria}, our \textbf{Success Rate} is displayed. The cumulative probability of the average participant achieving such a success rate or better for the given success criteria is displayed in the \textbf{p-value for Average Participant} column.}
    \label{tab:20}
\end{table}

\subsubsection{Men's and Women's Tournament Predictability Comparison}
\label{subsubsec:4.2.3}

Table~\ref{tab:21} compares the medal-winning log-loss thresholds from 2019 to 2022 for men's and women's tournaments. From 2023 onwards, Kaggle started hosting the annual competition with the predictions for men's and women's tournaments combined into one competition. The medal-winning log-loss thresholds were lower for the women’s tournament games than for the men’s, across all years for all types of medals. This is consistent with how women's tournament games have higher predictability than men's.

As shown in Tables~\ref{tab:19} and~\ref{tab:20}, our solution has led to stronger finishes and success rates in competitions that included women’s tournaments. We hypothesise that this is caused by less noise on the leaderboard when the women's tournament is included, resulting in less clear superior performance of our solution compared to the average participant.

\begin{table}[htbp]
    \centering
    \resizebox{\columnwidth}{!}{%
    \begin{tabular}{|l|c|c|c|}
    \hline
    \textbf{Iteration} & \textbf{Gold Medal} & \textbf{Silver Medal} & \textbf{Bronze Medal} \\
    \hline
    2019 Mens & 0.43815 & 0.45214 & 0.46116 \\
    2019 Womens & 0.34042 & 0.36147 & 0.37425 \\
    2021 Mens & 0.56942 & 0.58990 & 0.60065 \\
    2021 Womens & 0.40776 & 0.43979 & 0.45302 \\
    2022 Mens & 0.58480 & 0.60358 & 0.61141 \\
    2022 Womens & 0.41333 & 0.42675 & 0.43643 \\
    \hline
    \end{tabular}%
    }
    \caption{For each \textbf{Iteration} of Kaggle's annual college basketball prediction competition (2019–2022) before the competition design was changed, the log-loss score required to win \textbf{Gold Medal}, \textbf{Silver Medal}, and \textbf{Bronze Medal} is displayed for the separate men’s and women’s tournament competitions.}
    \label{tab:21}
    \end{table}

\section{Conclusion}
\label{sec:7}

We provide a novel foundation for sports forecasting by proposing two novel methods to accurately use betting odds as a benchmark for sports forecasting models under different data availability situations. 

We firstly proposed a novel odds-only method for converting betting odds to probabilities without using historical betting odds, which outperforms the accuracy of existing odds-only methods for the majority of bookmakers. We provide novel propositions to prove that our proposed OO-EPC method converts betting odds to probabilities by aligning with the objectives of bookmakers, and also how both variants of the existing Shin method make assumptions that are not supported by what we observe from historical betting odds.

We secondly proposed a novel generalised linear model for converting betting odds to probabilities using historical betting odds for fitting, which outperforms the accuracy of existing generalised linear models. We also use properties of our proposed FL-GLM to provide evidence that the existing odds-only methods multiplicative and power methods make assumptions that contradict with what we observe from historical betting odds. Furthermore, our proposed FL-GLM allows for an interpretable measurement of the favourite-longshot bias.

\section*{Declarations}

\begin{itemize}
\item Author contribution: K.G. performed the experiments, wrote the manuscript text, tables and figures. All authors reviewed the manuscript.
\item Funding: This research did not receive funding.
\item Conflict of interest/Competing interests: I declare that the authors have no conflict/competing interests as defined by Nature Portfolio, or other interests that might be perceived to influence the results and/or discussion reported in this paper.
\item Data availability: Datasets obtained from Football-Data.co.uk were analysed in this study, and are publicly available at the following URL: https://www.football-data.co.uk/.
Datasets obtained from Kaggle Competitions were also analysed in this study, and are publicly available at the following URL: https://www.kaggle.com/competitions
\item Dual publication: results/data/figures in this manuscript have not been published elsewhere, nor are they under consideration by another publisher.
\item Authorship: corresponding author has read the journal policies and submitted this manuscript in accordance with those policies.
\item Third party material: all of the figures, images and supplementary material are owned by the authors and/or no permissions are required.
\end{itemize}

\bibliographystyle{sageh}

\bibliography{academicLibraryNature}

@Article{ShinConversionStrumbeljVariant,
  author    = {{\v{S}}trumbelj, Erik},
  journal   = {International Journal of Forecasting},
  title     = {On Determining Probability Forecasts from Betting Odds},
  year      = {2014},
  number    = {4},
  pages     = {934--943},
  volume    = {30},
  publisher = {Elsevier},
}

@Article{PowerConversion,
  author    = {Clarke, Stephen and Kovalchik, Stephanie and Ingram, Martin},
  journal   = {American Journal of Sports Science},
  title     = {Adjusting Bookmaker’s Odds to Allow for Overround},
  year      = {2017},
  issn      = {2330-8540},
  number    = {6},
  volume    = {5},
  publisher = {Swinburne},
}

@Article{ShinConversionKizildemirVariant,
  author    = {Kizildemir, Melis and Akin, Ertugrul and Alkan, Altug},
  journal   = {Journal of Quantitative Analysis in Sports},
  title     = {A Family of Solutions Related to Shin’s Model for Probability Forecasts},
  year      = {2025},
  publisher = {De Gruyter},
}

@Article{LongTermFootballPrediction,
  author    = {Constantinou, Anthony and Fenton, Norman},
  journal   = {Knowledge-Based Systems},
  title     = {Towards Smart-Data: Improving Predictive Accuracy in Long-Term Football Team Performance},
  year      = {2017},
  pages     = {93--104},
  volume    = {124},
  publisher = {Elsevier},
}

@Article{BettingOddsRatingSystem,
  author    = {Wunderlich, Fabian and Memmert, Daniel},
  journal   = {PloS one},
  title     = {The Betting Odds Rating System: Using Soccer Forecasts to Forecast Soccer},
  year      = {2018},
  number    = {6},
  pages     = {e0198668},
  volume    = {13},
  publisher = {Public Library of Science San Francisco, CA USA},
}

@Article{BettingOddsBiases,
  author    = {{\v{S}}trumbelj, Erik},
  journal   = {Journal of Sports Economics},
  title     = {A Comment on the Bias of Probabilities Derived from Betting Odds and their Use in Measuring Outcome Uncertainty},
  year      = {2016},
  number    = {1},
  pages     = {12--26},
  volume    = {17},
  publisher = {SAGE Publications Sage CA: Los Angeles, CA},
}

@Article{RatingsBettingOddsComparison,
  author    = {Leitner, Christoph and Zeileis, Achim and Hornik, Kurt},
  journal   = {International Journal of Forecasting},
  title     = {Forecasting Sports Tournaments by Ratings of (Prob)abilities: A Comparison for the EURO 2008},
  year      = {2010},
  number    = {3},
  pages     = {471--481},
  volume    = {26},
  publisher = {Elsevier},
}

@Article{FootballBothTeamScorePrediction,
  author    = {da Costa, Igor Barbosa and Marinho, Leandro Balby and Pires, Carlos Eduardo Santos},
  journal   = {International Journal of Forecasting},
  title     = {Forecasting Football Results and Exploiting Betting Markets: The Case of “Both Teams to Score”},
  year      = {2022},
  number    = {3},
  pages     = {895--909},
  volume    = {38},
  publisher = {Elsevier},
}

@Article{BettingOddsEfficiencyComparison,
  author    = {Angelini, Giovanni and De Angelis, Luca},
  journal   = {International Journal of Forecasting},
  title     = {Efficiency of Online Football Betting Markets},
  year      = {2019},
  number    = {2},
  pages     = {712--721},
  volume    = {35},
  publisher = {Elsevier},
}

@Article{BettingStrategyNBA,
  author    = {Hub{\'a}{\v{c}}ek, Ond{\v{r}}ej and {\v{S}}ourek, Gustav and {\v{Z}}elezn{\`y}, Filip},
  journal   = {International Journal of Forecasting},
  title     = {Exploiting Sports-Betting Market using Machine Learning},
  year      = {2019},
  number    = {2},
  pages     = {783--796},
  volume    = {35},
  publisher = {Elsevier},
}

@Article{BradleyTerryTennisPrediction,
  author    = {McHale, Ian and Morton, Alex},
  journal   = {International Journal of Forecasting},
  title     = {A Bradley-Terry Type Model for Forecasting Tennis Match Results},
  year      = {2011},
  number    = {2},
  pages     = {619--630},
  volume    = {27},
  publisher = {Elsevier},
}

@Article{WhyLogLoss,
  author    = {Wunderlich, Fabian and Memmert, Daniel},
  journal   = {International Journal of Forecasting},
  title     = {Are Betting Returns a Useful Measure of Accuracy in (Sports) Forecasting?},
  year      = {2020},
  number    = {2},
  pages     = {713--722},
  volume    = {36},
  publisher = {Elsevier},
}

@Article{SportsBettingMarketWisdom,
  author    = {Brown, Alasdair and Yang, Fuyu},
  journal   = {International Journal of Forecasting},
  title     = {The Wisdom of Large and Small Crowds: Evidence from Repeated Natural Experiments in Sports Betting},
  year      = {2019},
  number    = {1},
  pages     = {288--296},
  volume    = {35},
  publisher = {Elsevier},
}

@Article{SportsIndustryOverview,
  author  = {Bayarslan, Bahad{\i}r},
  journal = {Innovat{\i}ve Research In Sport Sc{\i}ences},
  title   = {Sports Industry Overview},
  year    = {2023},
  pages   = {5--20},
}

@InCollection{EfficientMarketHypothesis,
  author    = {Malkiel, Burton G},
  booktitle = {Finance},
  publisher = {Springer},
  title     = {Efficient market hypothesis},
  year      = {1989},
  pages     = {127--134},
  editor    = {Eatwell, John and Milgate, Murray and Newman, Peter}
}

@Article{FLBias,
  author    = {Griffith, Richard M},
  journal   = {The American Journal of Psychology},
  title     = {Odds Adjustments by American Horse-Race Bettors},
  year      = {1949},
  number    = {2},
  pages     = {290--294},
  volume    = {62},
  publisher = {JSTOR},
}

@Article{EloRating,
  author  = {Elo, Arpad E},
  journal = {Chess life},
  title   = {The Proposed USCF Rating System, its Development, Theory, and Applications},
  year    = {1967},
  number  = {8},
  pages   = {242--247},
  volume  = {22},
}

@Book{FLBiasStockMarkets,
  author    = {Hodges, Stewart D and Tompkins, Robert G and Ziemba, William T},
  publisher = {SSRN},
  title     = {The favorite/long-shot bias in S\&P 500 and FTSE 100 index futures options: the return to bets and the cost of insurance},
  year      = {2003},
}

@Software{GotoConversion,
  author = {Goto, Kaito},
  month  = aug,
  title  = {{Gambling Odds To Outcome probabilities Conversion (goto\_conversion)}},
  url    = {https://github.com/gotoConversion/goto_conversion},
  year   = {2023},
}

@InProceedings{MultiplicativeConversion,
  author       = {Chernov, Alexey and Vovk, Vladimir},
  booktitle    = {International Conference on Algorithmic Learning Theory},
  title        = {Prediction with expert evaluators’ advice},
  year         = {2009},
  publisher = {Springer},
  pages        = {8--22},
  editor    = {Gavald{\`a}, Ricard and Lugosi, G{\'a}bor and Zeugmann, Thomas and Zilles, Sandra},
}

@Article{FootballData,
  author = {Joseph Buchdahl},
  title  = {Football-Data.co.uk},
  year   = {2024},
  note   = {FootballData},
  url    = {https://football-data.co.uk/},
}

@Article{MultinomialLogisticRegression,
  author    = {Baxter, Mike},
  journal   = {The Mathematical Gazette},
  title     = {Generalised linear models , by P. McCullagh and JA Nelder. Pp 511.{\pounds} 30. 1989. ISBN 0-412-31760-5 (Chapman and Hall)},
  year      = {1990},
  number    = {469},
  pages     = {320--321},
  volume    = {74},
  publisher = {Cambridge University Press},
}

@Article{OrderedLogisticRegression,
  author    = {McCullagh, Peter},
  journal   = {Journal of the Royal Statistical Society: Series B (Methodological)},
  title     = {Regression models for ordinal data},
  year      = {1980},
  number    = {2},
  pages     = {109--127},
  volume    = {42},
  publisher = {Wiley Online Library},
}

@Article{PowerLawRegression,
  author    = {Clauset, Aaron and Shalizi, Cosma Rohilla and Newman, Mark EJ},
  journal   = {SIAM review},
  title     = {Power-law distributions in empirical data},
  year      = {2009},
  number    = {4},
  pages     = {661--703},
  volume    = {51},
  publisher = {SIAM},
}

@Misc{Kaggle2024,
  author       = {Jeff Sonas and Ryan Holbrook and Addison Howard and Anju Kandru},
  howpublished = {\url{https://kaggle.com/competitions/march-machine-learning-mania-2024}},
  note         = {Kaggle},
  title        = {March Machine Learning Mania 2024},
  year         = {2024},
}

@Misc{Kaggle,
  author       = {Addison Howard and Danielle Notaro and Eric Schmidt and Jeff Sonas and Jen Raulli and Rachel Ahn and Tiffany Martin and Will Cukierski},
  howpublished = {\url{https://kaggle.com/competitions/womens-machine-learning-competition-2019}},
  note         = {Kaggle},
  title        = {Google Cloud \& NCAA® ML Competition 2019-Women's},
  year         = {2019},
}

@Misc{GotoConversionKaggle,
  author       = {Goto, Kaito},
  howpublished = {\url{https://www.kaggle.com/kaito510}},
  title        = {Applying goto\_conversion to Kaggle's Annual Basketball Outcome Prediction Competition},
  year         = {2025},
}

@Book{sumpter2016soccermatics,
  author    = {David Sumpter},
  publisher = {Bloomsbury Sigma},
  title     = {Soccermatics: Mathematical Adventures in the Beautiful Game},
  year      = {2016},
  isbn      = {9781472924148},
  pages     = {351},
}

\end{document}